\title{Learning in Matching Games \\
with Bandit Feedback}
\author{
  Andreas Athanasopoulos \\
  University of Neuchatel\\
  Switzerland \\
  \texttt{andreas.athanasopoulos@unine.ch} \\
   \And
  Christos Dimitrakakis \\
  University of Neuchatel\\
  Switzerland \\
  \texttt{christos.dimitrakakis@gmail.com} \\
}
\newcommand{\util}{U}
\newcommand{\agent}{\mathrm{a}}
\newcommand{\setAgents}{\mathfrak{A}} 
\newcommand{\setPlayer}{\mathcal{P}} 
\newcommand{\player}{p}              
\newcommand{\carP}{\mathbf{p}}
\newcommand{\setArms}{\mathcal{A}}
\newcommand{\arm}{a}              
\newcommand{\strategy}{x}
\newcommand{\carA}{\mathbf{a}}
\newcommand{\rowStrategy}{x}
\newcommand{\rowAction}{i}
\newcommand{\rowSetAction}{\mathcal{I}}
\newcommand{\carI}{\mathbf{m}}
\newcommand{\columnStrategy}{y}
\newcommand{\columnAction}{j}
\newcommand{\columnSetAction}{\mathcal{J}}
\newcommand{\carJ}{\mathbf{k}}
\newcommand{\matching}{\mathfrak{m}}            
\newcommand{\Matching}{\mathbb{M}}
\newcommand{\pref}{\pi}              
\newtheorem{exmp}{Example}
\newtheorem{definition}{Definition}
\newtheorem{assumption}{Assumptions}
\newtheorem{theorem}{Theorem}
\newtheorem{Lemma}{Lemma}
\newtheorem{corollary}{Corollary}[theorem]
\newtheorem{proposition}{Proposition}
\newcommand{\proofstep}[1]{\noindent\textbullet \emph{#1.}}
\newenvironment{myproof}
{\par\noindent\textit{Proof. }}
{\hfill$\square$\par}
\DeclareMathOperator*{\argsort}{arg\,sort}
\DeclareMathOperator*{\argmin}{arg\,min}
\DeclareMathOperator*{\argmax}{arg\,max}
\begin{document}
\maketitle

\begin{abstract}
We introduce a learning problem in a generalized two-sided matching market, where agents select actions to interact with their match. Specifically, we consider a setting in which matched agents engage in zero-sum games with initially unknown payoff matrices, and we investigate whether a centralized procedure can learn an equilibrium from bandit feedback. We adopt the solution concept of a \emph{matching equilibrium}, where a matching \( \mathfrak{m} \) and a set of agent strategies \( X \) form an equilibrium if no agent has an incentive to deviate from \( (\mathfrak{m}, X) \). To quantify deviations of a candidate solution \( (\mathfrak{m}, X) \) from the equilibrium \( (\mathfrak{m}^\star, X^\star) \), we introduce the notion of \emph{matching instability}, which serves as a regret measure for the learning problem. We propose a UCB-based algorithm in which agents form preferences and select actions according to optimistic estimates of the payoffs. Our analysis establishes a sublinear, instance-independent regret upper bound, further supported by empirical evidence.
\end{abstract}

\keywords{Stable Matching \and Bandit feedback \and Zero-sum games.}

\section{Introduction}
Two-sided matching markets model situations in which two distinct sets of agents aim to match with each other based on individual preferences \cite{Roth_Sotomayor_1990}. Classic examples include students applying to universities, workers matching with firms, and organ donors paired with patients~\cite{roth1984evolution,gale_Shapley_1962,Haeringer}. A fundamental solution concept in this setting is \emph{stable matching}~\cite{gale_Shapley_1962}, a notion of equilibrium where no agent has an incentive to deviate from the proposed assignment. Stability has been widely studied in economics~\cite{gusfield1989stable}, while recent work in computer science considers online learning settings where agents’ preferences are initially unknown \cite{das2005two,pmlr-v108-liu20c,jagadeesan2021learning,athanasopoulos2025probably}.

Prior work on learning to match has mainly considered scenarios where, after being matched, agents directly receive a stochastic payoff that reflects their preferences, casting the problem in the Multi-Armed Bandit (MAB) framework \cite{das2005two}. However, a more realistic setting is one in which matched agents \emph{interact} through actions and obtain payoffs as outcomes of these interactions. For example, students may choose actions corresponding to the level of effort they invest, while universities may choose actions that adjust the academic environment, effectively engaging in a game (see also Appendix~\ref{app:mot} for further discussion).

We study this generalized \emph{learning scenario} in two-sided markets where, after being matched, agents engage in a two-player game that determines their utilities. 

\subsection{Related work}
\label{sec:RelatedWork}
\textbf{Social Games.} In a \emph{non-learning setting}  \cite{jackson2010social} introduced \emph{social games}, combining matching and games. They proposed the solution concept of a \emph{matching equilibrium}, where agents must play a Nash equilibrium (NE) with their match, and the matching \( \matching \) must be stable. Intuitively, this ensures that no agent pair can improve their utility by deviating from the match or strategy played. Below, we illustrate an example with a \emph{known} underlying zero-sum game (ZSG).
\begin{exmp}
Here there is one agent, $\{\player\}$, on the left side and two agents, $\{\arm_1, \arm_2\}$, on the right side, where the utilities of the actions in the corresponding ZSG are given by:

\begin{equation*}
\displaystyle
A_{\player, \arm_1} =
\begin{bmatrix}
1 & -1 \\
-1 & 1
\end{bmatrix}
\hspace{0.3in}
A_{\player, \arm_2} =
\begin{bmatrix}
1 & 1 \\
1 & 0
\end{bmatrix}
\end{equation*}
In this case, agents can naturally form their preferences based on the values of the NE in the respective ZSGs. Specifically, for the pair $(\player, \arm_1)$, the game value is $V^{\star}_{\player,\arm_1} = 0$, with the NE $(x_{p,\arm_1}^\star, x_{\arm_1}^\star)$, while for the second pair $(\player, \arm_2)$, the value is $V^{\star}_{\player,\arm_2} = 1$, with NE $(x_{\player, \arm_2}^\star, x_{\arm_2}^\star)$.

Thus, agent $p$ prefers $\arm_2$ over $\arm_1$, and the solution consisting of the matching $\matching = \{(\player, \arm_2)\}$ and the strategies $X = \{x_{\player,\arm_2}^\star, x_{\arm_2}^\star\}$ form a \emph{matching equilibrium} in the market. On the other hand, any deviation---either in the matching $\matching$ or in the strategies $X$--- can lead to an unstable outcome for the specific example \footnote{In general there is a non-empty set of stable matchings.}.
\end{exmp}

As illustrated, in the case of \emph{known} utilities, a stable outcome can be easily computed via an extension of the Gale-Shapley (GS) algorithm~\cite{jackson2008equilibrium}, which derives preferences based on the values of the game. In practice, however, agents may be \emph{unaware} of the underlying payoffs of the games (and thus their preferences), and instead must learn through repeated interactions, making the problem significantly more challenging. Our work is the first to consider this learning problem: we define assumptions and provide algorithms under which stable outcomes can be learned through interaction.

\textbf{Learning in two-sided matching.} The foundational work of \cite{das2005two} empirically evaluates multi-armed bandit (MAB) algorithms under specific preference settings. A follow-up by \cite{pmlr-v108-liu20c} introduced the notions of \emph{player-optimal} and \emph{player-pessimal} stable regret. Focusing on a setting where only one side of the market has unknown preferences, they analyze an explore-then-commit (ETC) algorithm that first explores to learn the preferences and then runs the GS algorithm on the estimated preference profile. They prove sublinear bounds for both regret notions. They also examine an upper confidence bound (UCB) approach, where agents maintain confidence estimates and repeatedly apply the GS algorithm using the preference ordering induced by the UCB estimates. For this approach, they establish sublinear bounds for player-pessimal stable regret; however, they prove an impossibility result for bounding player-optimal regret, demonstrating a fundamental challenge of the learning problem.

Later studies closed this gap by achieving sublinear player-optimal stable regret \cite{sankararaman2021dominate,basu2021beyond,kong2023player}, though they primarily focused on one-sided uncertainty and specific market structures. More recent studies have begun to consider settings where both sides of the market have uncertain preferences \cite{pagare2024explore,pmlr-v244-zhang24b}. However, these works employ ETC type algorithms, which are naturally expected to succeed given sufficient exploration.

A notable exception to the ETC framework is the work of \cite{jagadeesan2021learning,jagadeesan2021learning2}, which considers a UCB-based approach similar to \cite{pmlr-v108-liu20c} in settings with two-sided uncertainty. Their main model includes monetary transfers, though they also report results for a version without transfers as an ablation study. To establish sublinear regret in this setting, they propose an alternative regret notion of \emph{subset instability}, defined as the minimum subsidies required to stabilize the market. In our work, we extend this notion of regret to a setting where matched agents engage in a game, providing a suitable regret metric that captures the deviation from a matching equilibrium. Related concepts to subset instability have been explored in coalitional games, notably \emph{Cost of Stability}~\cite{bachrach2009cost} used to characterize games with an empty core, as well as the notion of the \(\epsilon\)-core~\cite{shapley1966quasi}.

\textbf{Learning in games.} There is a substantial body of research on learning in games studying uncoupled learning dynamics, where agents interact in a game without knowing the underlying game matrix or explicitly observing the opponents' actions~\cite{Nisan_Roughgarden_Tardos_Vazirani_2007,Cesa-Bianchi_Lugosi_2006}. This line of work builds on the development of no-regret learning algorithms \cite{hannan1957approximation,Cesa-Bianchi_Lugosi_2006}. In particular, in a two-player zero-sum game, if a single player employs a no-regret algorithm, their asymptotic average payoff is guaranteed to be at least the minimax value of the game, irrespective of the opponent's strategy. A stronger result arises in self-play: if both players employ no-regret algorithms, the joint empirical distribution of play asymptotically converges to a Nash equilibrium. In general-sum games, standard no-regret learning no longer guarantees convergence to a Nash equilibrium; instead, the empirical distribution of play converges to a coarse correlated equilibrium \cite{daskalakis2021near}. Subsequent work studies improved regret bounds \cite{daskalakis2011near,daskalakis2021near} and stronger guarantees for last-iterate convergence to equilibria \cite{daskalakis2017training}.

Since convergence to equilibrium is not easily guaranteed in the general case, we focus on zero-sum games, and by extension, constant-sum games. Moreover, integrating no-regret algorithms into our joint setting presents significant challenges, particularly in how agents form their preferences and in analyzing convergence to equilibrium when these preferences interact with matching mechanisms such as the GS algorithm. More specifically, our research is closely related to the work of \cite{o2021matrix}, who study learning in ZSGs under bandit feedback where agents additionally observe their opponents' actions. This information allows agents to form estimates of the underlying game matrix $A$ and apply optimism-based exploration strategies, leading to UCB-style algorithms. In self-play scenarios, their UCB approach empirically outperforms no-regret methods such as Exp3 \cite{auer1995gambling}, achieving faster convergence to the equilibrium. Crucially, their work evaluates performance using Nash regret, which compares the obtained rewards to the value of the game, rather than standard external regret. We adopt a similar information structure—observing the opponents' actions alongside bandit feedback—and propose a regret measure that reduces to Nash regret in the special case of ZSGs.

\subsection{Contributions}
We study an \emph{online learning} setting where agents interact in a two-player game with their assigned match, under \emph{unknown} payoff matrices with \emph{bandit} feedback. This creates a challenging learning scenario, as agents must simultaneously learn the underlying game and derive preferences from the consequences of their actions. To initiate the study of this joint setting, we introduce a set of assumptions and define an appropriate notion of regret that enables the theoretical analysis. In particular, our main contributions are:

\paragraph{1) Learning Problem:} We introduce a novel learning problem over a horizon \(T\), where at each step a central platform matches agents, and the matched agents engage in a ZSG for mathematical tractability.\footnote{Since ZSGs are equivalent to constant-sum games, our analysis can probably be extended easily to that case.} In our model, agents choose strategies conditional on their match, and after each interaction they observe a stochastic reward together with the action of their partner. This allows agents to maintain utility estimates over actions and adapt their strategies toward a stable outcome.

\paragraph{2) Regret formulation:}To characterize algorithms in our learning scenario, we propose an objective that measures the distance of the solution from equilibrium at each step. Specifically, we define \emph{matching instability} (Definition~\ref{def:m_instability}), which can be interpreted as the minimum subsidy required to stabilize the market. Our notion extends the \emph{subset instability} (Definition~\ref{def:s_instability}) introduced by~\cite{jagadeesan2021learning,jagadeesan2021learning2} for the two-sided matching problem. We also establish several properties of our metric that justify cumulative \emph{matching instability} as a suitable measure of regret.

\paragraph{3) Algorithm:} On the algorithmic side, we show that if agents follow the principle of optimism in the face of uncertainty, the learning process leads to sublinear regret. Specifically, we propose a UCB-like algorithm \cite{auer2002using} in which each agent maintains upper confidence bounds for the payoff matrices of the relevant games. Based on these optimistic estimates, agents form their preferences according to the estimated game values, which are then used to compute a stable matching via the GS algorithm. Once assigned, each agent selects a minimax strategy for their match, samples an action, and receives stochastic feedback.

\paragraph{4) Theoretical Analysis:} We prove that our algorithm achieves a regret of \( \tilde{O}\!\left(\sqrt{T\carI\carJ\carP\carA}\right) \), where \( \carI \) and \( \carJ \) denote the number of actions available to agents on the two sides, and \( \carP \) and \( \carA \) denote the number of agents on each side, respectively. This bound unifies and generalizes existing results in the literature, which are based on UCB-style algorithms. Specifically, when each agent has only one available action (i.e., \( \carI = \carJ = 1 \)), the setting reduces to a classical two-sided matching problem, and our regret matches the bound \( \tilde{O}(\sqrt{T\carP\carA}) \) from Theorem~6.6 in~\cite{jagadeesan2021learning2}. On the other extreme, when there is only one agent on each side (i.e., \( \carP = \carA = 1 \)), our regret becomes \( \tilde{O}(\sqrt{T\carI\carJ}) \). As we prove in Appendix~\ref{app:ZSG}, this matches the regret bound of the UCB algorithm proposed in \cite{o2021matrix} for ZSGs when both agents follow the algorithm in a self-play scenario.

\paragraph{5) Empirical Analysis:}We additionally conduct experiments on synthetic instances to further support our theoretical results. We compare our UCB algorithms in three different scenarios to gain better insights into the convergence of our approach. Specifically, we consider: 1) \textbf{Self-play}, where agents on both sides follow the UCB algorithm as described in our method; 2) \textbf{Nash-response}, where one side of the market selects the minimax strategy assuming known utilities for the agents; and 3) \textbf{Best-response}, where one side of the market commits to the best response, knowing the actions of the other opponent and their utilities.

\paragraph{6) Future directions:} As a broader contribution, our work opens several interesting research directions, including relaxing our current assumptions, exploring alternative models, and establishing stronger convergence guarantees to equilibria.

\paragraph{Paper structure.} Section~\ref{sec:prem} presents preliminaries on ZSGs and two-sided matching. Section~\ref{sec:social_games} describes the intersection of these settings, defines the learning problem, and introduces our proposed regret measure. Section~\ref{sec:Algo} presents our UCB algorithm, while Section~\ref{sec:Result} provides our main theorem alongside empirical results. Section~\ref{sec:conclusion} concludes and outlines potential future directions. Finally, all omitted proofs can be found in the Appendix.

\section{Preliminaries and Notation}
\label{sec:prem}
In this section, we formally introduce the concepts of two-player zero-sum games and the two-sided matching.

\textbf{Notation.} Throughout this paper, we use $\tilde{O}(\cdot)$ to hide logarithmic factors; formally, $f(x) = \tilde{O}(g(x))$ means that there exists a positive integer $k$ such that $f(x) = O(g(x) \log^k g(x))$. We use $1 \vee k := \max(1, k)$ for any integer $k$. For a positive integer $k$, we let $[n] := \{1, 2, \ldots, k\}$. We also write $\Delta_k$ to denote the probability simplex of size $k$. We also consider an element \( \agent \in \mathcal{S} = \mathcal{S}_1 \cup \mathcal{S}_2 \), where \( \mathcal{S}_1 \) and \( \mathcal{S}_2 \) are disjoint sets. We  define \( OS(\agent) \) as the opposite set of element \( \agent \), i.e., \( OS(\agent) = \mathcal{S}_2 \) if \( \agent \in \mathcal{S}_1 \), and \( OS(\agent) = \mathcal{S}_1 \) if \( \agent \in \mathcal{S}_2 \).

\subsection{Two-player zero-sum games}
\label{sec:zsg}
A two-player zero-sum game models an interaction between two competing agents, \( \player \) and \( \arm \), with \( \carI \) and \( \carJ \) actions, respectively. The game is specified by a pair of payoff matrices $(A_{\player}, A_{\arm})$ such that $A_{\player} = -A_{\arm}^T = A$, where $A \in [-1, 1]^{\carI \times \carJ}$ representing the utilities.

The agents select actions simultaneously: $\player$ chooses $\rowAction \in [\carI]$ and $\arm$ chooses $\columnAction \in [\carJ]$, after which they receive payoffs $A(i,j)$ and $-A(i,j)$, respectively. Agents typically select their actions from mixed strategies $\rowStrategy \in \Delta_m$ and $\columnStrategy \in \Delta_n$ i.e., probability distributions over their actions. The expected utilities are given by $\util_{\player}(\rowStrategy, \columnStrategy) = \rowStrategy^T A \columnStrategy$ and $\util_{\arm}(\columnStrategy,\rowStrategy) = -\columnStrategy^T A^T \rowStrategy$.

A central solution concept in game theory is the \emph{Nash equilibrium} (NE)~\cite{nash2024non}, in which no agent can unilaterally improve their utility by deviating from their equilibrium strategies $(\rowStrategy^{\star}, \columnStrategy^{\star})$ i.e., $ \util_p(\rowStrategy^{\star}, \columnStrategy^{\star}) \geq \util_p(\rowStrategy, \columnStrategy^{\star})$
and $\util_a( \columnStrategy^{\star},\rowStrategy^{\star}) \geq \util_a( \columnStrategy,\rowStrategy^{\star}) \forall \, \columnStrategy, \rowStrategy \in \Delta_m \times \Delta_n$.

The celebrated Minimax theorem~\cite{v1928theorie} states that the NE $(\rowStrategy^\star, \columnStrategy^\star)$ is attained if and only if each agent selects a strategy\footnote{The strategies can be computed efficiently via dual linear programs \cite{von1944morgenstern}} that maximizes their payoff against the opponent’s worst-case strategy, i.e., $\rowStrategy^{\star} = \argmax_{\rowStrategy} \min_{\columnStrategy}  \rowStrategy ^TA \columnStrategy$ and $\columnStrategy^{\star} = \argmin_{\columnStrategy} \max_{\rowStrategy } \columnStrategy^TA^T\rowStrategy$. The value of the game for each agent is then $V_{\player}^\star = \util_{\player}(\rowStrategy^{\star}, \columnStrategy^{\star})$ and $V_{\arm}^\star = \util_{\arm}(\columnStrategy^{\star}, \rowStrategy^{\star})$, with $V_{\player}^\star = -V_{\arm}^\star$.

\subsection{Two-sided matching}
\label{sec:matching}
The two-sided bipartite matching problem consists of two \emph{nonempty} finite sets of agents, denoted by $\setPlayer$ and $\setArms$, that aims to be matched one-to-one according to a matching $\Matching \subseteq \setPlayer \times \setArms$, representing a set of matched agents that are pairwise disjoint. Let $\setAgents = \setPlayer \cup \setArms$ denote the complete set of agents. By a slight abuse of notation, we also use the equivalent functional representation $\matching: \setAgents \rightarrow \setAgents \cup \{\bot\}$ of a matching, where $\matching(\player) = \arm$ and $\matching(\arm) = \player$ for the pair $(\player, \arm) \in \Matching$, and $\matching(\agent) = \bot$ if $\agent \in \setAgents$ is unmatched.

Each agent $\agent \in \setAgents$ maintains a complete preference list $\pi_{\agent}$ over agents on the other side, which is derived from a utility function. We denote the global utility function as $\util : \setAgents \times \setAgents \rightarrow [-1, 1]$, where $\util_{\agent,\matching(\agent)}$ represents the utility of agent $\agent$ for being matched with agent $\matching(\agent)$. Negative utilities are allowed to capture undesirable matches. The utilities of unmatched agents are arbitrary, i.e., $\util_{\agent,\bot} \in [-1, 1]$, whereas prior work typically assumes zero utility\footnote{We adopt this generalization to unify the notation required in our model.}. 

\textbf{Stability:} To characterize a matching $\matching$ that aligns with agents' preferences, \cite{gale_Shapley_1962} introduced stability as a notion of \emph{equilibrium} in the market, where no pair of agents has an incentive to deviate from $\matching$. A matching $\matching$ is stable if it is 1) \emph{individually rational}: each agent prefers their assigned match over remaining unmatched, i.e., 
$\util_{\agent, \matching(\agent)} \geq \util_{\agent, \bot}$ for all $\agent \in \setAgents$, 
and 2) \emph{has no blocking pairs}: there is no $(\player, \arm) \in \setPlayer \times \setArms$ such that 
$\util_{\player,\arm} > \util_{\player, \matching(\player)}$ and $\util_{\arm, \player} > \util_{\arm, \matching(\arm)}$.

They prove that the set of stable matchings $\mathcal{S}$ is always non-empty. They introduce the Gale--Shapley (GS) algorithm, in which agents on one side of the market sequentially propose to agents on the other side, while the side receiving the proposals is temporarily matched with their most preferred agents so far, until all agents are matched. The stable matching $\matching^{\star}_s$ produced by the algorithm is optimal for the proposing side and pessimal for the side receiving the proposals, in the sense that each agent is matched with their most/least preferred partner among all stable matchings in $\mathcal{S}$.

To measure the distance of a matching $\matching$ from equilibrium, \cite{jagadeesan2021learning} propose \emph{Subset Instability} (SI), which they also suggest as a suitable objective for learning stable matchings. The measure corresponds to the minimum subsidies $s_\agent$ that we have to provide to agents $\agent \in \setAgents$ to stabilize the market, i.e., ensuring the absence of blocking pairs and individual rationality (Constraints~\ref{si_c1} and~\ref{si_c2} in Definition \ref{def:s_instability}).

\begin{definition}[Subset Instability\footnote{Definition 6.4 from \cite{jagadeesan2021learning2}}]
\label{def:s_instability}
The subset instability $\mathbf{SI}(\matching, \util, \setAgents)$  of a matching \( \matching \) according to a utility function $\util:\setAgents \times \setAgents \rightarrow \mathbb{R}$ over a set of agents $\setAgents$ is defined by:
\begin{equation*}
    \min_{s \in \mathbb{R}} \sum_{\agent \in \setAgents} s_\agent
\end{equation*}
subject to:
\begin{align}
&\min\Big(
\util_{\player,\arm} - \util_{\player,\matching(\player)} - s_{\player},  \;\util_{\arm,\player} - \util_{\arm, \matching(\arm)} - s_{\arm}
\Big) \leq 0, 
\; \forall(\player, \arm) \in \setPlayer \times \setArms, \label{si_c1} \\
&\util_{\agent, m(\agent)} - \util_{\agent, \bot} + s_{\agent} \geq 0, \quad \forall \agent \in \setAgents, \label{si_c2} \\
&s_{\agent} \geq 0, \quad \forall \agent \in \setAgents. \label{si_c3}
\end{align}
\end{definition}

\section{Problem Setting}
\label{sec:social_games}
We introduce an \emph{online learning problem} in a generalized market, where matched agents engage in a game and receive \emph{bandit feedback} from their interactions.

\subsection{Learning Problem}
Specifically, we study a repeated interaction scenario over a horizon $T$, where at each step a central platform produces a matching $\matching_t$. The agents are paired and engage in a ZSG that depends on their match. They choose actions and receive stochastic rewards that reflect the utility of their actions.

We consider the special case where an agent $\agent$ is matched to \( \matching(\agent) \), and they play a \emph{distinct} \textbf{ZSG} described by the pair of payoff matrices $(A_{\agent,\matching(\agent)}, A_{\matching(\agent),\agent})$, with entities in range $[-1,1]$. The agents $\agent \in \setAgents$ then choose their strategies $\strategy_{\agent,\matching(\agent)}\in \Delta_{k_{\agent, \matching(\agent)}}$ for their match, with $k_{\agent, \matching(\agent)}$ represents the number of actions in their respective game. When the match is clear from the context we refer to the strategy of the agent $\agent$ with $\strategy_{\agent}$. We also denote with $X=\{\strategy_{\agent}\}_{\agent \in \setAgents}$ the set of strategies for the agents.

The expected utility of agent $\agent$ when matched with $\matching(\agent) $, using strategies $ (\strategy_{\agent}, \strategy_{\matching(\agent)}) $, is given by:
\begin{equation}
    \util_{\agent,\matching(\agent)}(\strategy_{\agent}, \strategy_{\matching(\agent)}) = \strategy_{\agent}^T A_{\agent,\matching(\agent)} \strategy_{\matching(\agent)},
  \end{equation}

We now describe the learning process, where at each step $t = 1, \ldots, T $:
\begin{enumerate}
    \item A central platform chooses a matching $\matching_t$, based on the estimated agent preferences $\hat{\pi}_{\agent}$.
    \item Each agent $\agent \in \setAgents$ observes their match $\matching_t(\agent)$, and selects a strategy $\strategy_{\agent}^t$.
    \item The agents $\agent \in \setAgents$ draw their actions \( i_{\agent}^t \sim \strategy_{\agent}^t \) from their strategies.
    \item Each agent \( \agent \in \setAgents \) receives a random reward \( r_{\agent,\matching_t(\agent)}^t \) with expectation $A_{\agent,\matching_t(\agent)}( i_{\agent}^t, i_{\matching_t(\agent)}^t)$.
    \item The agents $\agent \in \setAgents$ observes the action of their match, $i_{\matching_t(\agent)}^t$, and updates their estimate of the reward $\hat{A}^t_{\agent,\matching_t(\agent)}(i_{\agent}^t, i_{\matching_t(\agent)}^t)$.
    \item The agents update their preferences $\hat{\pi}_{\agent}$ based on the estimates $\hat{A}^t_{\agent,\matching_t(\agent)}$.
\end{enumerate}

At a high level, after the matching $\matching_t$, the agents simultaneously select actions to interact after observing their match. They then observe a stochastic reward and the actions of their match. This information allows them to estimate the payoffs and form the preferences that they report back to the platform, which in turn determines the matching for the next round. In Section~\ref{sec:Algo}, we present the full algorithm for our setting.

\begin{assumption}
\label{assumption}
Below, we enumerate the list the assumptions for our learning problem.
\begin{enumerate}[label=A\arabic*., leftmargin=*]
    \item We assume the utilities for being unmatched \( \util_{\agent, \bot} \in [-1,1] \)  are known  \( \forall \agent \in \setAgents \). \label{assumption:A1}
    \item The agents truthfully report their preferences to the central platform. \label{assumption:A4}
    \item To simplify notation, we assume that agents on the same side have the same number of available actions for each possible match, i.e., $A_{\player,\arm} \in [-1, 1]^{\carI \times \carJ} \forall (\player, \arm) \in \setPlayer \times \setArms$. \label{assumption:A2}
    \item We assume that the stochastic reward $r_{\agent,\matching(\agent)}^t$ is a 1-sub-Gaussian random variable with expected value $\mathbb{E}_{\agent,\matching(\agent)}[r_{\agent,\matching(\agent)}^t \mid i_{\agent}^t, i_{\matching(\agent)}^t] = A_{\agent,\matching(\agent)}(i_{\agent}^t, i_{\matching(\agent)}^t)$. \label{assumption:A3}
\end{enumerate}
\end{assumption}
Assumption~\ref{assumption:A1} is standard, where utilities are often set to zero. We instead assume arbitrary utilities in $[-1, 1]$ to allow for non-trivial individually rational matchings\footnote{In a ZSG, one agent’s gain is the other’s loss, i.e., $\util_{\agent, \matching(\agent)} = -\util_{\matching(\agent), \agent}$. Setting $\util_{\agent, \bot} = 0$ may violate individual rationality, i.e., $\util_{\agent, \matching(\agent)} \geq \util_{\agent, \bot}$, leading to trivial empty matchings.}. Assumption~\ref{assumption:A4} is standard in centralized matching  \cite{liu2021bandit}\footnote{\cite{liu2021bandit} also study whether an agent can benefit from misreporting their preferences, showing a positive result in the case of a single stable matching, while for the general setting the problem remains open.} and can be enforced by having the mechanism observe the rewards and implement the algorithm. Assumption~\ref{assumption:A2} is made for notational convenience and is not restrictive, as the number of actions can always be bounded by the maximum across all agent pairs. Finally, Assumption~\ref{assumption:A3} requires the rewards to be 1-sub-Gaussian for simplicity, although our results can also be generalized to bounded rewards using Hoeffding's inequality.


\subsection{Equilibrium}
Matching with strategic agents was initially proposed by \cite{jackson2008equilibrium} as \emph{social games} in a \emph{non-learning} setting. They propose a \emph{matching equilibrium}, as a notion of stability, where intuitively no pair of agents has an incentive to deviate from the matching and instead play a Nash equilibrium with each other.

In order for the solution $(\matching, X)$ to be stable, three conditions must hold: (i) agents must be individually rational: they should prefer their assigned match over remaining unmatched; (ii) agents must play a Nash equilibrium with their match; and (iii) there should be no blocking pair, i.e. no unmatched pair of agents who would both prefer to be matched with each other under the resulting equilibrium strategies. Below, we formally define \emph{matching equilibrium} for cardinal utilities:

\begin{definition}[Matching Equilibrium]
\label{def:matching_equilibria}
A \emph{matching equilibrium}, is a pair $(\matching, X)$, consisting of a matching $\matching$ and a profile of strategies $X = \{x_{a}\}_{a \in \setAgents}$, such that the outcome satisfies the following conditions:\begin{enumerate}
    \item \textbf{Individual rationality:}  $\util_{\agent, \bot} \leq  \util_{\agent, m(\agent)}(x_{\agent},x_{\matching(\agent)})$ for all $\agent \in \setAgents$. \label{item:IR}
    \item \textbf{Nash Equilibrium:} for all $\agent \in \setAgents$ \\ $U_{\agent, \matching(\agent)}(x'_{\agent},x_{\matching(\agent)}) \leq \util_{\agent, \matching(\agent)}(x_{\agent},x_{\matching(\agent)})$ $\, \forall x'_{\agent} \in \Delta_x$ \label{item:Nash}\item \textbf{No blocking pairs:} There exists no pair $(\player,\arm) \in \setPlayer \times \setArms$ such that:
    $$
    V^{\star}_{\player,\arm} > \util_{\player, \matching(\player)}(x_{\player},x_{\matching(\player)})
    $$
    and
    $$V^{\star}_{\arm,\player} > \util_{\arm, \matching(\arm)}(x_{\arm}, x_{m(\arm)}). \label{item:Blocking}
    $$
\end{enumerate}
\end{definition}


\cite{jackson2010social} prove that \emph{matching equilibrium} always exist, by extending the classical result of \cite{gale_Shapley_1962}. To see this, consider running the GS algorithm, where agents naturally form preferences over potential partners based on the value of the induced games at their NE. Specifically, assuming that agents know the payoffs of the games, each agent $\agent$ can rank the agents on the opposite side $\agent'$ according to the value $V^*_{\agent,\agent'}$ of the NE $(\strategy^*_{\agent}, \strategy^*_{\agent'})$ in their corresponding ZSG. Note that in two-player ZSG, a NE always exists and all equilibria yield the same value $V^*$.

\paragraph{Challenges:} Our problem is to efficiently learn an equilibrium in stable matching games. In matching problems, algorithms are typically based on GS, with agent preferences derived from upper confidence estimates \cite{liu2021bandit,jagadeesan2021learning}. In contrast, learning in games often relies on no-regret adversarial algorithms, such as MWU or EXP3 \cite{MAL-068}. Combining these approaches poses challenges regarding how agents should form their preferences using no-regret algorithms, how to analyze the behavior of the GS algorithm. In the following section, we introduce a weaker notion of distance to equilibrium that facilitates learning in our setting.


\subsection{Regret}
We now introduce our incentive-aware learning objective, designed to analyze the cumulative behavior of learning algorithms. We define the notion of \emph{matching instability} to quantify the distance between a solution \( (\matching, X) \) and the equilibrium \( (\matching^{\star}, X^{\star}) \) at each time step, extending the \emph{subset instability} metric
(Definition~\ref{def:s_instability}) originally proposed
by~\cite{jagadeesan2021learning,jagadeesan2021learning2} for two-sided matching markets.

\begin{definition}[Matching Instability]
\label{def:m_instability}
The matching instability $\mathbf{MI}(m, X)$ of a matching \( \matching \) under mixed strategies $X = \{\strategy_{\agent}\}_{\agent \in \setAgents}$ is defined as:
\begin{equation*}
    \min_{s \in \mathbb{R}} \sum_{a \in \setAgents} s_a
\end{equation*}
subject to:
\noindent
\begin{align}
&\min(V^{\star}_{\player,\arm} - \util_{\player,\matching(\player)}(\strategy_{\player},\strategy_{\matching(\player)}) - s_{\player}, 
V^{\star}_{\arm,\player} - \util_{\arm,\matching(\arm)}(\strategy_{\arm},\strategy_{\matching(\arm)}) - s_{\arm}) \leq 0, \forall_{\player,\arm \in\setPlayer\times\setArms} \label{eq:C11}\tag{C1}\\
&\util_{\agent,\matching(\agent)}(\strategy_{\agent},\strategy_{\matching(\agent)}) - \util_{\agent,\bot} + s_{\agent} \geq 0, \; \forall \agent \in \setAgents \label{eq:C22}\tag{C2}\\
&V^{\star}_{\agent,\matching(\agent)} - \util_{\agent,\matching(\agent)}(\strategy_{\agent},\strategy_{m(\agent)}) - s_{\agent} \leq 0, \; \forall \agent \in \setAgents \label{eq:C33}\tag{C3}\\
&s_{\agent} \geq 0, \; \forall \agent \in \setAgents \label{eq:C44}\tag{C4}
\end{align}
\end{definition}

Our metric can be interpreted as the minimum subsidies that the platform must provide to users either to retain them on the platform or, equivalently, to achieve stability by satisfying the constraints of the \emph{matching equilibrium} in Definition~\ref{def:matching_equilibria}. Specifically: (i) constraint~\ref{eq:C11} enforces the absence of blocking pairs under Nash strategies, (ii) constraint~\ref{eq:C22} ensures individual rationality, and (iii) constraint~\ref{eq:C33} guarantees that the utilities of the agents are at least as high as the minmax value of the game.

Constraints~\ref{eq:C11} and~\ref{eq:C22} align exactly with the requirements of matching equilibrium. Constraint~\ref{eq:C33} relaxes the Nash equilibrium condition, requiring instead that agents obtain a utility no less than the equilibrium value. This choice enables a tractable theoretical analysis of regret circumvents the challenges of incorporating adversarial no-regret algorithms, which are typically employed to converge to NE. Our perspective is consistent with prior work that evaluates algorithmic performance relative to the value of the game \cite{jrob,o2021matrix,daskalakis2011near,Cesa-Bianchi_Lugosi_2006}. Next, we describe properties of our instability measure important for learning.

\begin{proposition}
\label{prop:1}
The Matching Instability \( \mathbf{MI}(\matching, X) \) of a solution \( (\matching, X) \) satisfies the following properties:
\begin{enumerate}
    \item \( \mathbf{MI}(\matching, X) \) is always non-negative.
   \item \( \mathbf{MI}(\matching, X) \) equal to zero if and only if:
    \begin{itemize}
    \item[A)] Each agent receives the same valuation as in the NE of their respective game, i.e., 
    \( \util_{\agent,\matching(\agent)}(x_{\agent},x_{\matching(\agent)}) = V^{\star}_{\agent,\matching(\agent)} \forall \agent \in \setAgents \).
    \item[B)] Matching \( \matching \) is stable under the strategies $X$.
    \end{itemize}
    \item In the case where each agent has a single action, i.e., \( \carI = \carJ = 1 \), the model reduces to the two-sided matching problem of Section \ref{sec:matching}, and the matching instability coincides with the subset instability of Definition~\ref{def:s_instability}.
    \item In the case of a single agent on each side, i.e., \( \setPlayer = \{ \player \} \) and \( \setArms = \{ \arm \} \), where they are always matched, i.e., \( m(\player) = \arm \), the model reduces to the ZSG setting. In this case, the matching instability \( \mathbf{MI}(\matching, X) \) for strategies \( X = \{ \strategy_\player, \strategy_\arm \} \) becomes:
    \begin{equation*}
        \mathbf{MI}(\matching, X) = \left| V^{\star}_{\agent,\player} - U_{\agent, \player}(\strategy_\agent, \strategy_{\player}) \right|,
    \end{equation*}
    which measures the absolute difference from the value of the game. This is similar to the notion of absolute Nash regret, which is used as a proxy for the distance from a Nash equilibrium in ZSG \cite{o2021matrix}.
\end{enumerate}
\end{proposition}

\noindent\textbf{Regret:} We define the regret of an algorithm as the cumulative matching instability over a horizon $T$:
\begin{equation*}
    R_T = \sum_{t=1}^T \mathbf{MI}(\matching_t, X_t)
\end{equation*}

\section{UCB Algorithm}
\label{sec:Algo}
\begin{algorithm}
\caption{\textsc{UCB-MG}: UCB-Based Algorithm for Matching Games}
\label{alg:ucb_algo}
\begin{algorithmic}[1]
    \State \textbf{Input:} $T$, $\delta$, $\setPlayer$, $\setArms$, $\rowSetAction$, $\columnSetAction$,
    \State $n_{\player,\arm}(i,j) = 0 \; \forall \; (\player,\arm,i,j) \in \setPlayer\times\setArms\times\rowSetAction\times\columnSetAction$
    \State $\hat{A}^{t=0}_{\player,\arm} = 0 \; \forall \; (\player,\arm) \in \mathcal{P}\times\mathcal{A}$
    \For{$t \in 1,\cdots, T$}
        \State $\bar{A}_{\player,\arm}(i,j) = \hat{A}^t_{\player,\arm}(i,j) + \sqrt{\frac{2 \log(1/\delta)}{1 \vee n_{\player,\arm}(i,j) }} \; \forall_{\player, \arm, i, j}$
        \State $\bar{A}_{\arm,\player}(j,i) = -\hat{A}^t_{\player,\arm}(i,j) + \sqrt{\frac{2 \log(1/\delta)}{1 \vee n_{\player,\arm}(i,j) }} \; \forall_{\player, \arm, i, j}$
        \State $\bar{V}_{\player,\arm} = max_{x \in \Delta_x} min_{y \in \Delta_y} x^T \bar{A}_{\player,\arm} y \; \forall_{\player,\arm}$
        \State $\bar{V}_{\arm,\player} = max_{y \in \Delta_y} min_{x \in \Delta_x} y^T \bar{A}_{\arm,\player} x \; \forall_{\player,\arm}$
        \State $\hat{\pi}_{\agent} = \argsort_{i} \bar{V}_{\agent, i} \; \forall \agent \in \setAgents $
        \State $\matching_t = GS(\{\hat{\pi}\}_{\agent \in \setAgents})$
        \For{$(\player,\arm) \in \matching_t$}
            \State $x = arg \max_{x \in \Delta_x} min_{y \in \Delta_y} x^T \bar{A}_{\player,\arm} y$
            \State $y = arg \max_{y \in \Delta_y} min_{x \in \Delta_x} y^T \bar{A}_{\arm,\player} x$
            \State sample actions $i \sim x$ and $j \sim y$
            \State sample $r^t_{\player,\arm}(i,j)$ and update $\hat{A}_{\player,\arm}^t(i,j)$
            \State $n_{\player,\arm}(i,j) = n_{\player,\arm}(i,j) + 1 $
        \EndFor
    \EndFor
\end{algorithmic}
\end{algorithm}

We now discuss how agents follow the optimism-in-the-face-of-uncertainty principle from a self-play perspective. Since agents observe the actions of their matched partners after each interaction, they form their preferences and select their strategies based on upper confidence estimates of the payoff matrix. A central platform can thus match the agents \( \agent \in \setAgents \) according to preferences derived from these upper confidence estimates. Subsequently, each agent selects their min--max strategies \(X\) based on the upper confidence payoffs associated with their assignment. For clarity, Algorithm~\ref{alg:ucb_algo} outlines this self-play learning procedure as a single algorithm.

More formally, at each time step \( t \in [T] \),
each agent \( \agent \in \setAgents \) maintains an optimistic estimate of the payoff matrix \( \bar{A}^t_{\agent,\agent'} \) for playing a game with agent \( \agent' \in OS(\agent) \cup \{\bot\} \). Based on these estimates, agents form their preferences \( \bar{\pref}_{\agent} \) according to the game values \( \bar{V}^{\star}_{\agent,\agent'} \) when matched with agent \( \agent' \), i.e., \( \bar{\pref}_\agent = \argsort_{\agent'} \bar{V}^{\star}_{\agent,\agent'} \)\footnote{Note that the values of the game \( \bar{V}^{\star}_{\agent,\agent'} \) can be computed without knowledge of the opponent's strategy.}. Each agent then reports their preferences to the central platform, which produces a matching \( \matching_t \) using the GS algorithm based on the estimated preferences.

Once matched, each agent $\agent \in \setAgents$ observes their match
$\matching_t(\agent)$ and samples actions \( i_{\agent}^t \) from
their respective min-max strategies \( x^t_{\agent}\) according to the
upper confidence payoff matrix
\( \bar{A}^t_{\agent,\matching_t(\agent)}\).  After choosing their
actions, the agents \( \agent \in \setAgents \) observe the actions of
their matched partners \( \matching_t(\agent) \), and receive a reward
\( r_{\agent,\matching_t(\agent)}^t \), with
\( r_{\matching_t(\agent),\agent}^t =
-r_{\agent,\matching_t(\agent)}^t \). These observations are used to
update the empirical mean of the payoff matrix
$\hat{A}^t_{\agent, \matching(\agent)}(i,j)$.

The upper confidence bounds of the payoff matrices for each agent $\agent \in \setAgents$ can be computed as:
\begin{equation*}
    \bar{A}^t_{\agent, \agent'}(i,j) = \hat{A}^t_{\agent, \agent'}(i,j) + \sqrt{\frac{2 \log(1/\delta)}{1 \vee n_{\agent, \agent'}^t(i,j)}}, \forall \agent' \in OS(\agent) 
\end{equation*}
where \( n_{\agent, \agent'}^t(i,j) \) is the number of times agents \( (\agent, \agent') \) played actions \( (i, j) \) up to time \( t \).

\section{Results}
\label{sec:Result}
We now show that the self-play approach outlined in Algorithm~\ref{alg:ucb_algo} gradually guides the agents toward a matching equilibrium. In particular, our theorem states that the cumulative matching instability incurred over time grows sublinearly with the horizon \(T\). The proof is provided in the Appendix.

\begin{theorem}
\label{theo:ucb}
 Let $\mid \setPlayer \mid = \carP$, $\mid \setArms \mid = \carA$, $\mid \rowSetAction \mid = \carI$ and $\mid \columnSetAction \mid = \carJ$. When Algorithm~\ref{alg:ucb_algo} is run with parameter $\delta= 1/(4  T^2  \carP^2\carA^2\carI \carJ)$, the expected regret $R_T$ under horizon $T$ is bounded by:
 \begin{equation*}
    \mathbb{E}[R_T] = \mathbb{E}[\sum_{t=1}^T \mathbf{MI}(m_t, X_t)] \leq \tilde{O}\left(\sqrt{T\mathbf{k}\mathbf{m}\mathbf{p}\mathbf{a}} \right).
\end{equation*}
\end{theorem}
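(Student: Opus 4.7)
The plan is to combine an optimism-under-uncertainty analysis of the per-matched-pair game with the stability property of the Deferred Acceptance (DA) procedure applied to the optimistic preferences. First I would introduce the \emph{good event} $\mathcal{E}$ on which every optimistic matrix entry dominates the truth, i.e.\ $\bar{A}^t_{\agent,\agent'}(i,j) \geq A_{\agent,\agent'}(i,j)$ and $\bar{A}^t_{\agent,\agent'}(i,j) - A_{\agent,\agent'}(i,j) \leq 2 w^t_{\agent,\agent'}(i,j)$ for every $(t,\agent,\agent',i,j)$, where $w^t_{\agent,\agent'}(i,j) := \sqrt{2\log(1/\delta)/(1\vee n^t_{\agent,\agent'}(i,j))}$ is the confidence width. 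Hoeffding together with a union bound over the $O(T\carP\carA\carI\carJ)$ relevant tuples and the choice $\delta = 1/(4T^2\carP^2\carA^2\carI\carJ)$ yields $\Pr[\mathcal{E}^c]=O(1/T)$; since $\mathbf{MI}(\matching_t, X_t) \leq O(|\setAgents|)$ deterministically, the bad-event contribution to $\mathbb{E}[R_T]$ is $O(1)$.

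On $\mathcal{E}$, I would prove a two-sided \emph{optimism gap lemma} for each matched pair $(\player,\arm) \in \matching_t$: since $x^t_\player$ is the maximin of $\bar{A}^t_{\player,\arm}$, we have $(x^t_\player)^T \bar{A}^t_{\player,\arm} x^t_\arm \geq \bar{V}_{\player,\arm} \geq V^{\star}_{\player,\arm}$, and combining with $\bar{A}^t_{\player,\arm} - A_{\player,\arm} \in [0,2w^t_{\player,\arm}]$ elementwise one obtains
\begin{equation*}
\bar{V}_{\player,\arm} - \util_{\player,\arm}(x^t_\player,x^t_\arm) \;\leq\; 2 (x^t_\player)^T w^t_{\player,\arm} x^t_\arm,
\end{equation*}
and the symmetric inequality for $\arm$. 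Adding the two and using $\util_{\player,\arm}(x^t_\player,x^t_\arm)+\util_{\arm,\player}(x^t_\arm,x^t_\player)=0$ also yields $\bar{V}_{\player,\arm}+\bar{V}_{\arm,\player} \leq 2 (x^t_\player)^T w^t_{\player,\arm} x^t_\arm$, which implies in particular the true Nash gap bound $|V^{\star}_{\player,\arm}-\util_{\player,\arm}(x^t_\player,x^t_\arm)| \leq 2(x^t_\player)^T w^t_{\player,\arm} x^t_\arm$.

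Next I would exhibit an explicit feasible subsidy vector for the LP in Definition~\ref{def:m_instability} by taking
\begin{equation*}
s_\agent \;:=\; \max\bigl(0,\; \bar{V}_{\agent,\matching_t(\agent)} - \util_{\agent,\matching_t(\agent)}(x^t_\agent,x^t_{\matching_t(\agent)})\bigr),
\end{equation*}
and verifying the four constraints. \eqref{eq:C3} is immediate from $V^{\star}\leq\bar{V}$ on $\mathcal{E}$; \eqref{eq:C2} uses that the DA matching is individually rational with respect to the optimistic values, so $\bar{V}_{\agent,\matching_t(\agent)} \geq \util_{\agent,\bot}$; and \eqref{eq:C1} uses that $\matching_t$ is stable under $\bar{V}$, so for each unmatched $(\player,\arm)$ either $\bar{V}_{\player,\arm}\leq\bar{V}_{\player,\matching_t(\player)}$ or $\bar{V}_{\arm,\player}\leq\bar{V}_{\arm,\matching_t(\arm)}$, and a short case split on whether $s_\player$ or $s_\arm$ attains its positive branch then shows the corresponding term of \eqref{eq:C1} is non-positive via $V^{\star}\leq\bar{V}$. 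Combined with the optimism gap lemma this gives, for each matched pair, $s_\player + s_\arm \leq O(1)\cdot(x^t_\player)^T w^t_{\player,\arm} x^t_\arm$, and hence
\begin{equation*}
\mathbf{MI}(\matching_t, X_t) \;\leq\; O(1)\sum_{(\player,\arm)\in\matching_t} (x^t_\player)^T w^t_{\player,\arm} x^t_\arm.
\end{equation*}

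Finally, I would pass to expectation via $\mathbb{E}[w^t_{\player,\arm}(i^t_\player,i^t_\arm)\mid\mathcal{F}_{t-1},(\player,\arm)\in\matching_t] = (x^t_\player)^T w^t_{\player,\arm} x^t_\arm$ and bound the realized sum $\sum_{t,(\player,\arm)\in\matching_t} w^t_{\player,\arm}(i^t,j^t)$: grouping by $(\player,\arm,i,j)$ telescopes each entry's contribution to $O(\sqrt{\log(1/\delta)\cdot n^T_{\player,\arm}(i,j)})$, and Cauchy--Schwarz over the $\leq \carP\carA\carI\carJ$ active tuples combined with the total-pull constraint $\sum n^T_{\player,\arm}(i,j) \leq T\min(\carP,\carA)$ delivers the claimed $\tilde{O}(\sqrt{T\carI\carJ\carP\carA})$ rate once the $\log(1/\delta)$ factor is absorbed into $\tilde{O}$. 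The main obstacle I anticipate is the constraint verification for the subsidy vector: simultaneously satisfying \eqref{eq:C1}, \eqref{eq:C2} and \eqref{eq:C3} when $\bar{V}_\agent - \util_\agent$ is negative on one side of a matched pair (so $s_\agent=0$) but positive on the other, which forces the argument to lean critically on the DA stability under optimistic preferences rather than on the per-agent estimates alone.
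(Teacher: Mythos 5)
Your proposal is correct and follows essentially the same route as the paper: the same good event with a union bound over the $O(T\carP\carA\carI\carJ)$ confidence intervals, the same per-step argument exhibiting a feasible subsidy vector whose feasibility rests on optimism ($V^{\star}\leq\bar V$), the maximin guarantee $\bar V_{\agent,\matching(\agent)}\leq \bar{\util}_{\agent,\matching(\agent)}(x_\agent,x_{\matching(\agent)})$, and DA-stability under the optimistic values, followed by the same per-tuple pigeonhole and Cauchy--Schwarz summation. The only substantive differences are that you use the clipped witness $s_\agent=\max(0,\bar V_{\agent,\matching(\agent)}-\util_{\agent,\matching(\agent)})$ where the paper uses the slightly larger $\bar{\util}_{\agent,\matching(\agent)}-\util_{\agent,\matching(\agent)}$ (yours needs the short case split on \eqref{eq:C1} but is equally valid), and that your honest total-pull accounting $\sum n^T_{\player,\arm}(i,j)\leq T\min(\carP,\carA)$ actually yields an extra $\sqrt{\min(\carP,\carA)}$ factor after Cauchy--Schwarz --- a factor the paper's own final step silently suppresses by using $\sum n^T\leq T$ --- so your last sentence claims slightly more than the computation gives, but the discrepancy is shared with (indeed, hidden in) the paper's proof rather than introduced by yours.
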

This regret bound recovers and generalizes several existing results in the literature. In particular, when each agent has access to only a single action (i.e., \( \carI = \carJ = 1 \)), the problem reduces to a classical two-sided matching scenario, and the bound simplifies to \( \tilde{O}(\sqrt{T \carP \carA}) \), similar to the bound of Theorem 6.6 in~\cite{jagadeesan2021learning2}. When there is only one agent on each side (i.e., \( \carP = \carA = 1 \)), the problem reduces to a ZSG and the regret becomes \( \tilde{O}(\sqrt{T \carI \carJ}) \). In this case, the algorithm corresponds to a direct application of the UCB algorithm of~\cite{o2021matrix} in a self-play setting, where we show in Appendix~\ref{app:ZSG} that yields the same regret bound.


\subsection{Simulations}
\label{sec:Experiment}
\begin{figure*}[t]
\centering
\includegraphics[width=\textwidth]{./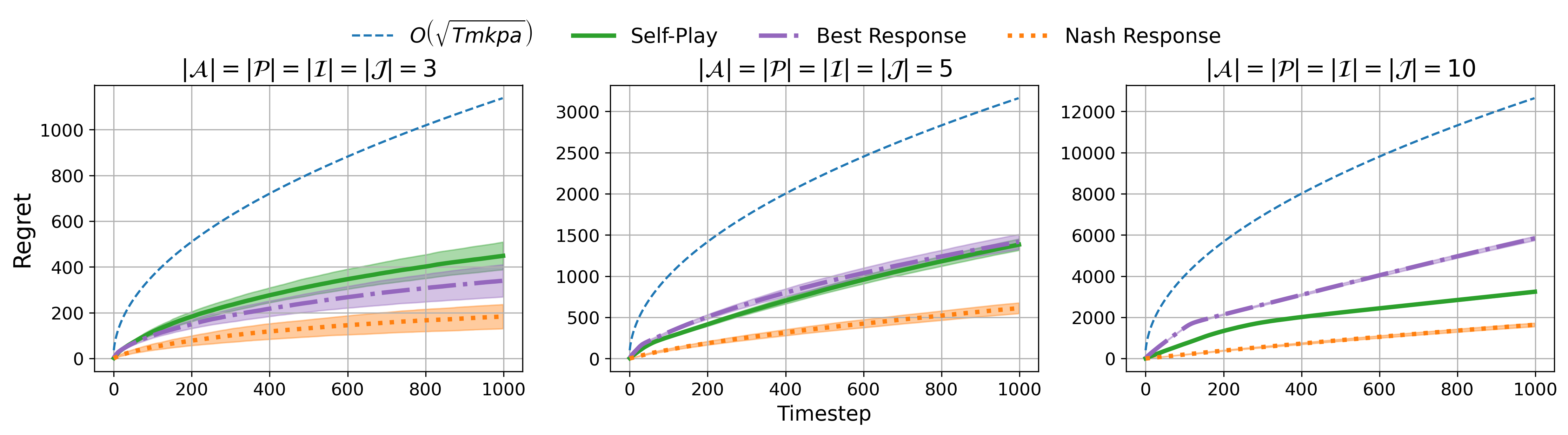}
\caption{Regret of the different settings, including the theoretical bound. Lines represent the average performance of the algorithm, and shaded areas indicate the standard deviation across runs.}
\label{fig:regret_self_play}
\end{figure*}
We now present a set of simulations that complement our theoretical results, in which we empirically compare three different scenarios. We generate 50 random instances by independently sampling the entities payoff matrices of the ZSG from a normal distribution, i.e., $A_{p,a}(\rowAction,\columnAction) \sim \mathcal{N}(0,1) \; \forall (\player, \arm, \rowAction, \columnAction) \in \setPlayer \times \setArms \times \rowSetAction \times \columnSetAction$. The code for reproducing all the experiments is provided in the supplementary material\footnote{\url{https://github.com/a-athanasopoulos/Learning-in-Matching-Games}} while additional details are provided in the Appendix.

We compare the \textbf{self-play} scenario outlined in Algorithm~\ref{alg:ucb_algo} with two additional cases in which agents on one side, $\setArms$, have access to additional information and can act differently, in order to study how the regret scales relative to these settings. First, we consider the \textbf{Nash-response} setting. The agents in $\setPlayer$ follow Algorithm~\ref{alg:ucb_algo}, while the agents in $\setArms$ form their preferences and select strategies according to the NE of the true payoff matrices \(A\). In this case, we expect the regret to be lower, as the agents on one side know the preferences and play according to the NE. Next, we consider the other extreme, the \textbf{Best-response} setting, where agents \( \arm \in \setArms \) know the true payoff matrices and additionally observe the strategies selected by agents in \( \setPlayer \), i.e., \( x_{\player,\arm} \) for all \( \player \in \setPlayer \), before selecting their own strategies. They then form their preferences and choose strategies that exploit their match by playing best responses. Specifically, each agent computes their best-response strategy as \( \strategy_{\arm,\player} = \argmax_{\strategy} \util_{\arm,\player}(\strategy, \strategy_{\player,\arm}) \), and form their preferences according to the resulting payoffs: \( \pref_{\arm} = \argsort_{\player \in \setPlayer} \util_{\arm,\player}(\strategy_{\arm,\player}, \strategy_{\player,\arm}) \).

Figure~\ref{fig:regret_self_play} compares the regret for the different settings for varying numbers of agents and actions, along with the theoretical bound of $O\left(\sqrt{T\carA\carP\carI\carJ}\right)$ (dashed blue), where we omit the logarithmic term for clarity. In all experimental settings, we observe sublinear regret growth, indicating that the system progressively learns a stable outcome. The \textbf{Nash-response} (dotted orange) scenario yields the lowest regret as excepted, as one side of the market has full knowledge of the payoff matrices and acts according to NE. In addition, the \textbf{Best-response} (dash-dotted purple) scenario initially outperforms \textbf{Self-play} (solid green), but as the number of agents and actions increases, its performance deteriorates. This is due to agents in \( \setArms \) exploiting their matched partners in \( \setPlayer \), which introduces greater complexity. Plots for different market sizes are provided in the Appendix.

\section{Conclusion and Future Work}
\label{sec:conclusion}
We introduce a novel problem of learning equilibria from bandit feedback in a generalized two-sided matching market, where interactions between matched agents follow a zero-sum game. We precisely formulate the learning protocol, specifying the required assumptions, and introduce a suitable measure of regret. Finally, we propose an algorithm based on the principle of optimism in the face of uncertainty, which achieves sublinear regret demonstrating that \emph{matching equilibria} can be learned efficiently in this setting.

The motivation for modeling matching with strategic agents stems from the observation that, in many social and economic environments, agents receive payoffs only after interacting through their actions. Our study represents an initial step toward understanding learning in this setting and opens several promising research directions, including relaxing our current assumptions, exploring alternative models, and establishing stronger notions of convergence to the equilibria.

The most direct direction for future research is to study \textbf{richer game-theoretic models}, moving beyond zero-sum games to general-sum games. A central challenge in this setting is how agents can form their preferences while following no-regret algorithms when interacting with their match. In this context, integrating no-regret algorithms with matching mechanisms such as the GS algorithm presents significant analytical challenges. Moreover, allowing agents to employ no-regret learning may enable the study of stronger interaction dynamics, relaxing the assumption that agents directly observe the actions of their match. The use of no-regret algorithms may also lead to stronger notions of convergence to equilibrium, since in our current setting the agents only converge toward the value of the game. Another direction is to explore \textbf{alternative game-theoretic models}, such as Markov games or Stackelberg games, which can better capture strategic behavior and temporal dynamics in one-to-one matching problems. \textbf{Alternative matching models} are also intresting, including many-to-one markets, potentially modeled using polymatrix games~\cite{cai2016zero}. Additional directions include learning equilibria in \textbf{adversarial environments}, such as adversarially changing games~\cite{pmlr-v97-cardoso19a}. Finally, addressing the \textbf{decentralized case}, where agents select both their match and strategy without central coordination, remains an important open research problem.

\bibliographystyle{unsrt}  
\bibliography{references}  

\appendix
\section*{Appendix Outline}
Now, we provide the detailed proofs and discussion omitted from the main text:
\begin{itemize}
    \item \textbf{Appendix~\ref{app:related_work}:} Provides an extended related work.
    \item \textbf{Appendix~\ref{app:mot}:} Further discussion on motivation examples.
    \item \textbf{Appendix~\ref{app:ZSG}:} Provides a corollary to the theorem of paper \cite{o2021matrix} for their algorithm in the self-play scenario.
    \item \textbf{Appendix~\ref{sec:app_prop}:} Contains the proof of Proposition~\ref{prop:1} from Section 3.
    \item \textbf{Appendix~\ref{sec:app_proof}:} Contains the proof of Theorem~\ref{theo:ucb} from Section 5.
    \item \textbf{Appendix~\ref{app:exp}:} Contains the additional information on the empirical study together with additional result omited in the main paper.
\end{itemize}

\section{Related work}
\label{app:related_work}
\textbf{Stable matching problem.} Two-sided matching markets model situations in which two distinct sets of agents aim to match with each other based on individual preferences~\cite{gusfield1989stable,Roth_Sotomayor_1990}. Classic examples include students applying to universities, workers matching with firms, and organ donors paired with patients~\cite{roth1984evolution}. In their fundamental work, \cite{gale_Shapley_1962} introduced the concept of stable matching, a notion of equilibrium in which no two agents have an incentive to deviate. \cite{gale_Shapley_1962} proved that the set of stable matchings $\mathcal{S}$ is always non-empty by introducing the Gale–Shapley (GS) algorithm, which is based on sequential proposals from one side of the market to the other. The algorithm returns the (unique) \emph{optimal stable matching} $m^\star$ for the proposing side, in the sense that every proposer receives the best partner they can obtain in any stable matching. Conversely, the resulting matching is pessimal for the side receiving the proposals, providing each agent on that side with their least-preferred partner among all stable matchings.

\textbf{Social Games.} In the \emph{non-learning setting}, the intersection of matching and games was introduced by \cite{jackson2010social} named as social games. They propose the solution concept of a \emph{matching equilibrium}, where agents must play a Nash equilibrium (NE) with their match, and the matching \( \matching \) must be stable. Intuitively, a pair of agents dissatisfied with their current outcome,  may have an incentive to deviate and form a new match in which they adopt a different NE. Below, we illustrate an example with \emph{known} underlying zero-sum games (ZSGs).

\begin{exmp}
Consider the following example with one agent, $\{\player\}$, on the left side and two agents, $\{\arm_1, \arm_2\}$, on the right side. For the sake of the argument, we assume that agents always prefer being matched rather than remaining single. The utilities of the actions in the respective zero-sum games are given by:

\begin{equation*}
\displaystyle
A_{\player, \arm_1} =
\begin{bmatrix}
1 & -1 \\
-1 & 1
\end{bmatrix}
\hspace{0.3in}
A_{\player, \arm_2} =
\begin{bmatrix}
1 & 1 \\
1 & 0
\end{bmatrix}
\end{equation*}
Agents can naturally form their preferences based on the values of the Nash equilibrium in the respective ZSGs. Specifically, for the pair $(\player, \arm_1)$, the game value is $V^{\star}_{\player,\arm_1} = 0$, with the Nash equiblirium $(x_{p,\arm_1}^\star, x_{\arm_1}^\star)$ given by $x_{p,\arm_1}^\star = [0.5,\ 0.5]$ and $x_{\arm_1}^\star = [0.5,\ 0.5]$, while for the second pair $(\player, \arm_2)$, the value is $V^{\star}_{\player,\arm_2} = 1$, with Nash strategies $x_{\player, \arm_2}^\star = [1,\ 0]$ and $x_{\arm_2}^\star = [0,\ 1]$.

Thus, agent $p$ prefers $\arm_2$ over $\arm_1$ under Nash strategies $(x_{\player, \arm_2}^\star,x_{\player, \arm_1}^\star)$, and the solution consisting of the matching $\matching = \{(\player, \arm_2)\}$ and the strategies $X = \{x_{\player,\arm_2}^\star, x_{\arm_2}^\star\}$ form a \emph{matching equilibrium} in the market.

On the other hand, any deviation---either in the matching $\matching$ or in the strategies $X$--- can lead to an unstable outcome for the specific example \footnote{In general there is a non-empty set of stable matchings.}.
\end{exmp}

As illustrated, in the case of \emph{known} utilities, a stable outcome can be easily computed via an extension of the Gale-Shapley (GS) algorithm~\cite{jackson2008equilibrium}, which derives preferences based on the values of the game. In practice, however, agents may be \emph{unaware} of the underlying payoffs of the games (and thus their preferences), and instead must learn through repeated interactions, making the problem significantly more challenging. In our work, we first consider this learning problem and make an initial contribution by specifying under which assumptions, and in what sense, such a stable outcome can be learned.

\textbf{Learning in two-sided matching.} The foundational work of \cite{das2005two} empirically evaluates multi-armed bandit (MAB) algorithms under specific preference settings. A follow-up by \cite{pmlr-v108-liu20c} introduced the notions of \emph{player-optimal} and \emph{player-pessimal} stable regret. Focusing on a setting where only one side of the market has unknown preferences, they analyze an explore-then-commit (ETC) algorithm that first explores to learn the preferences and then runs the GS algorithm on the estimated preference profile. They prove sublinear bounds for both regret notions. They also examine an upper confidence bound (UCB) approach, where agents maintain confidence estimates and repeatedly apply the GS algorithm using the preference ordering induced by the UCB estimates. For this approach, they establish sublinear bounds for player-pessimal stable regret; however, they prove an impossibility result for bounding player-optimal regret, demonstrating a fundamental challenge of the learning problem.

Later studies closed this gap by achieving sublinear player-optimal stable regret \cite{basu2021beyond,kong2023player}, though they primarily focused on one-sided uncertainty and specific market structures. More recent studies have begun to consider settings where both sides of the market have uncertain preferences \cite{pagare2024explore,pmlr-v244-zhang24b}. However, these works employ ETC type algorithms, which are naturally expected to succeed given sufficient exploration.

A notable exception to the ETC framework is the work of \cite{jagadeesan2021learning}, which considers a UCB-based approach similar to \cite{pmlr-v108-liu20c} in settings with two-sided uncertainty. Their main model includes monetary transfers, though they also report results for a version without transfers as an ablation study. To establish sublinear regret in this setting, they propose an alternative regret notion of \emph{subset instability}, defined as the minimum subsidies required to stabilize the market. In our work, we extend this notion of regret to a setting where matched agents engage in a game, providing a suitable regret metric that captures the deviation from a matching equilibrium. Related concepts to subset instability have been explored in coalitional games, notably \emph{Cost of Stability}~\cite{bachrach2009cost} used to characterize games with an empty core, as well as the notion of the \(\epsilon\)-core~\cite{shapley1966quasi}.

\textbf{Learning in games.} There is a substantial body of research on learning in games studying uncoupled learning dynamics, where agents interact in a game without knowing the underlying game matrix or explicitly observing the opponents' actions~\cite{Nisan_Roughgarden_Tardos_Vazirani_2007,Cesa-Bianchi_Lugosi_2006}. This line of work builds on the development of no-regret learning algorithms \cite{hannan1957approximation,Cesa-Bianchi_Lugosi_2006}. In particular, in a two-player zero-sum game, if a single player employs a no-regret algorithm, their asymptotic average payoff is guaranteed to be at least the minimax value of the game, irrespective of the opponent's strategy. A stronger result arises in self-play: if both players employ no-regret algorithms, the joint empirical distribution of play asymptotically converges to a Nash equilibrium. In general-sum games, standard no-regret learning no longer guarantees convergence to a Nash equilibrium; instead, the empirical distribution of play converges to a coarse correlated equilibrium \cite{daskalakis2021near}. A more refined result states that if agents has no \textit{internal} regret, their joint empirical distribution converges to the set of correlated equilibria \cite{Nisan_Roughgarden_Tardos_Vazirani_2007}. Subsequent research has focused on improving regret bounds \cite{daskalakis2021near} and establishing stronger guarantees for last-iterate convergence to equilibria, driven by its practical importance in modern machine learning applications, particularly in stabilizing the training dynamics of generative adversarial networks \cite{daskalakis2017training}.

In our work, we focus on ZSGs to reduce the additional complexity inherent in general-sum games and to provide mathematical tractability for studying learning at the intersection of games and matching markets. More specifically, our research is closely related to the work of \cite{o2021matrix}, who study learning in ZSGs under bandit feedback where agents additionally observe their opponents' actions. This information allows agents to form estimates of the underlying game matrix $A$ and apply optimism-based exploration strategies, leading to UCB-style algorithms. In self-play scenarios, their UCB approach empirically outperforms no-regret methods such as Exp3 \cite{Cesa-Bianchi_Lugosi_2006}, achieving faster convergence to the equilibrium. Crucially, their work evaluates performance using Nash regret, which compares the obtained rewards to the value of the game, rather than standard external regret. We adopt a similar information structure—observing the opponents' actions alongside bandit feedback—and propose a regret measure that reduces to Nash regret in the special case of ZSGs.
\section{Motivation and Real-World Applications}
\label{app:mot}

The motivation for modeling matching with strategic agents stems from the observation that, in many social and economic environments, agents receive payoffs only after interacting through their actions. In such scenarios, an agent’s preference depends not only on whom they are matched with but also on the subsequent actions of both parties. Consequently, agents must decide both whom to match with and what policy to follow once matched.

Our setting lies at the intersection of two-sided matching and algorithmic game theory, where the equilibrium of the market is generally considered the desired outcome, as it aligns the incentives of the agents. In our framework, an equilibrium represents a scenario in which no agent can strategically deviate from their chosen actions or match to increase their utility. A natural class of such scenarios involves matched agents engaging in tasks or games that require coordination or strategic decision-making.

For example, consider the job market, where employees are matched with employers. The agents in this setting interact in ways that directly affect their utilities. An employee may adopt a particular behavioral policy, such as the level of effort or time investment, while employers make decisions that shape the working environment, influencing both their own utility and that of the employee.

A more realistic scenario for our learning setting arises in digital platforms, where agents engage in repeated interactions. One example is ride-hailing services such as Uber, where a central platform matches drivers with users. In this context, both parties take actions that influence the outcome: users may decide how much to tip, while drivers choose how quickly to drive or how they engage with passengers during the ride. These decisions affect the utilities and experiences of both parties, making the setting inherently strategic with respect to both the matching and the subsequent actions.

In our paper, we take a step toward studying the complex learning dynamics of strategic agents in matching markets by focusing on a specific instantiation of a centralized bipartite matching model, where agents engage in zero-sum games. This setting enables us to derive initial results for this novel learning problem.
\section{Additional result on ZSG}
\label{app:ZSG}
We now derive a corollary for the ZSG setting with bandit feedback based on the results of \cite{o2021matrix}. In particular, the authors prove in Theorem 1 that if an agent follows a UCB strategy against any adversary, the worst-case Nash regret is bounded by:
\begin{equation*}
     R(T) = \mathbb{E}\left[ \sum_{t=1}^T V^{\star} - y_t^\top A x_t \right] \leq \widetilde{O}(\sqrt{mkT}).  
\end{equation*}
\begin{corollary}
In the zero-sum game (ZSG) setting with bandit feedback, if both agents follow the UCB algorithm of \cite{o2021matrix} in a self-play scenario, then the absolute Nash regret is bounded as follows:
\[
R_{abs}(T) = \mathbb{E}\left[ \sum_{t=1}^T \left| V^{\star} - y_t^\top A x_t \right| \right] \leq \tilde{O}(\sqrt{T \carI \carJ}).
\]
\end{corollary}
\begin{myproof}
Consider a column and a row player whose game matrices satisfy $A_x = -A_y^{\top} = A$. We denote the strategy of the column player by $x$, the strategy of the row player by $y$, and the corresponding equilibrium values by $V_x^\star$ and $V_y^\star$, where $V_x^\star = -V_y^\star = V^\star$.

The analysis of \cite{o2021matrix} suggests that, under the good event $\neg \mathcal{E}$, at each time step $t$, for a column player following the UCB method against an arbitrary opponent, we have that: $$ V_x^{\star} - \mathrm{E}^t[y_t^T A_x x_t] \leq \mathrm{E}^t[\sqrt{\frac{2}{1 \vee n^t_{i_t,j_t}}\log{\frac{1}{\delta}}}] $$

By symmetry, the same argument holds for the row player following UCB against any arbitrary opponent, so that: $$ V_y^{\star} - \mathrm{E}^t[x_t^T A_y y_t] \leq \mathrm{E}^t[\sqrt{\frac{2}{1 \vee n^t_{i_t,j_t}}\log{\frac{1}{\delta}}}] $$

Now, replacing $-V_y^\star = V_x^\star$ and $ A_y = - A_x^T$, we have that: $$ -\ \mathrm{E}^t[\sqrt{\frac{2}{1 \vee n^t_{i_t,j_t}}\log{\frac{1}{\delta}}}] \leq V_x^{\star} - \mathrm{E}^t[y_t^T A_x x_t] $$

Combining this with the inequality for the other agent implies that the per-step absolute difference is: $$ \mid V^{\star} - \mathrm{E}^t[y_t^T A_x x_t] \mid \leq \mathrm{E}^t[\sqrt{\frac{2}{1 \vee n^t_{i_t,j_t}}\log{\frac{1}{\delta}}}]. $$

Now, following similar arguments to \cite{o2021matrix}: $$ R_{abs}(T) = \mathrm{E}[ \sum_{t=1}^T \mid V_x^{\star} - y_t^T A_x x_t\mid] \leq \widetilde{O}(\sqrt{mkT}) $$
    
\end{myproof}

\section{Proof of Proposition~\ref{prop:1}}
\label{sec:app_prop}
For completeness and ease of reading, we restate Proposition~\ref{prop:1} together with the definition of matching instability of Section 3.

\setcounter{definition}{2}
\begin{definition}[Matching Instability]
The matching instability $\mathbf{MI}(m, X)$ of a matching \( \matching \) under mixed strategies $X = \{\strategy_{\agent}\}_{\agent \in \setAgents}$ is defined as:
\begin{equation*}
    \min_{s \in \mathbb{R}} \sum_{a \in \setAgents} s_a
\end{equation*}
subject to:
\noindent
\begin{align}
&\min(V^{\star}_{\player,\arm} - \util_{\player,\matching(\player)}(\strategy_{\player},\strategy_{\matching(\player)}) - s_{\player}, 
V^{\star}_{\arm,\player} - \util_{\arm,\matching(\arm)}(\strategy_{\arm},\strategy_{\matching(\arm)}) - s_{\arm}) \leq 0, \forall_{\player,\arm \in\setPlayer\times\setArms} \label{eq:C1}\tag{C1}\\
&\util_{\agent,\matching(\agent)}(\strategy_{\agent},\strategy_{\matching(\agent)}) - \util_{\agent,\bot} + s_{\agent} \geq 0, \; \forall \agent \in \setAgents \label{eq:C2}\tag{C2}\\
&V^{\star}_{\agent,\matching(\agent)} - \util_{\agent,\matching(\agent)}(\strategy_{\agent},\strategy_{m(\agent)}) - s_{\agent} \leq 0, \; \forall \agent \in \setAgents \label{eq:C3}\tag{C3}\\
&s_{\agent} \geq 0, \; \forall \agent \in \setAgents \label{eq:C4}\tag{C4}
\end{align}
\end{definition}

\setcounter{proposition}{0}
\begin{proposition}
The Matching Instability \( \mathbf{MI}(\matching, X) \) of a solution \( (\matching, X) \) satisfies the following properties:
\begin{enumerate}
    \item \( \mathbf{MI}(\matching, X) \) is always non-negative.
   \item \( \mathbf{MI}(\matching, X) \) equal to zero if and only if:
    \begin{itemize}
    \item[A)] Each agent receives the same valuation as in the NE of their respective game, i.e., 
    \( \util_{\agent,\matching(\agent)}(x_{\agent},x_{\matching(\agent)}) = V^{\star}_{\agent,\matching(\agent)} \forall \agent \in \setAgents \).
    \item[B)] Matching \( \matching \) is stable under the strategies $X$.
    \end{itemize}
    \item In the case where each agent has a single action, i.e., \( \carI = \carJ = 1 \), the model reduces to the two-sided matching problem of Section 2, and the matching instability coincides with the subset instability of Definition 1.
    \item In the case of a single agent on each side, i.e., \( \setPlayer = \{ \player \} \) and \( \setArms = \{ \arm \} \), where they are always matched, i.e., \( m(\player) = \arm \), the model reduces to the ZSG setting. In this case, the matching instability \( \mathbf{MI}(\matching, X) \) for strategies \( X = \{ \strategy_\player, \strategy_\arm \} \) becomes:
    \begin{equation*}
        \mathbf{MI}(\matching, X) = \left| V^{\star}_{\agent,\player} - U_{\agent, \player}(\strategy_\agent, \strategy_{\player}) \right|,
    \end{equation*}
    which measures the deviation of $(\strategy_\player,\strategy_\arm)$ from NE.
\end{enumerate}
\end{proposition}

\setcounter{proposition}{1}

\begin{myproof} \textit{ } \\
\proofstep{Case 1: Non-negativity} \\
By definition, the subset matching instability is always non-negative due to constraint \ref{eq:C4}.

\proofstep{Case 2:}  \\
We begin by showing that matching instability $\mathbf{MI}$ equals zero i.e., \( s_\agent = 0 \) for all \( \agent \in \setAgents \), then the matching $\matching$ is stable under the stategies $X$, and the utlities of the agents are the same with NE.

We start by showing that the satisfaction of the constraint \ref{eq:C3} ensures $ (\strategy_\agent, \strategy_{\matching(\agent)})$ has the same valuation with a Nash equilibrium for an arbitrary pair of matched agents $\agent$ and $\matching(\agent)$. To see this consider the constrain \ref{eq:C3} for the agents $\agent$ where we obtain that:
\begin{equation}
        V^{\star}_{\agent,\matching(\agent)} \leq \util_{\agent,\matching(\agent)}(\strategy_\agent,\strategy_{m(\agent)}) \label{c31}\\
\end{equation}
while the constrain for the agent $\matching(\agent)$ is:
\begin{equation}
    V^{\star}_{\matching(\agent),\agent} \leq \util_{\matching(\agent),\agent}(\strategy_{\matching(\agent)},\strategy_{\agent}) \label{c32}
\end{equation}
Expanding the inequality from \ref{c31} using the bilinear form we have that:
\begin{align*}
\strategy^{\star T}_{\agent} A_{\agent,\matching(\agent)} \strategy^{\star}_{\matching(\agent)} \leq \strategy_{\agent}^T A_{\agent,\matching(\agent)} \strategy_{\matching(\agent)} \tag{E1} \label{E1}
\end{align*}
Similarly, for the second inequality \ref{c32} :
\begin{align*}
    \strategy_{\matching(\agent)}^{\star T} A_{\matching(\agent),\agent} \strategy^{\star}_{\agent} &\leq \strategy_{\matching(\agent)}^T A_{\matching(\agent),\agent} \strategy_{\agent} \\
    \strategy_{\matching(\agent)}^{\star T} (-A_{\agent, m(\agent)})^T  \strategy^{\star}_{\agent}  &\leq \strategy_{\matching(\agent)}^T (-A_{\agent, \matching(\agent) })^T \strategy_{\agent} \\
    \strategy_{\matching(\agent)}^{\star T} A_{\agent, m(\agent)}^T  \strategy^{\star}_{\agent} &\geq  \strategy_{\matching(\agent)}^T A_{\agent, \matching(\agent) }^T \strategy_{\agent} \\
    \strategy^{\star T}_{\agent} A_{\agent,m(\agent)} \strategy_{\matching(\agent)}^{\star} &\geq \strategy_{\agent}^T A_{\agent,m(\agent)} \strategy_{\matching(\agent)} \tag{E2} \label{E2}
\end{align*}
Thus, from \ref{E1} and \ref{E2} we obtain:
\begin{align*}
    \strategy^{\star T}_{\agent} A_{\agent,m(\agent)} \strategy_{\matching(\agent)}^{\star} \leq &\strategy_{\agent}^T A_{\agent,\matching(\agent)} \strategy_{\matching(\agent)} \leq \strategy^{\star T}_{\agent} A_{\agent,m(\agent)} \strategy_{\matching(\agent)}^{\star}c \Rightarrow \\
    &\util_{\agent,m(\agent)}(\strategy_\agent, \strategy_{\matching(\agent)}) = V^\star_{\agent,m(\agent)}
\end{align*}
which implies that \( (\strategy_\agent, \strategy_{\matching(\agent)}) \) has the same valuation with the Nash equilibrium for the pair \( (\agent, \matching(\agent)) \).

Now, the constraints
\ref{eq:C1} and
\ref{eq:C2} imply that the matching
$\matching$ is stable under the strategies
$X$, as it has no blocking pair and is individually rational.

Conversely, if we have a matching equilibrium $(\matching, X)$, then \( s_\agent = 0 \; \forall \agent \in \setAgents\) is a feasible solution to the optimization constraints, and we have \( \mathbf{MI}(\matching, X) = 0 \), since the metric is always non-negative, that completing the proof.
\\

\proofstep{Case 3: Single Action for Each Agent}  \\
In the case where each agent has a single action, the constraint \ref{eq:C3} of Definition \ref{def:m_instability} is trivially satisfied, while the constraints \ref{eq:C1} and \ref{eq:C2} reduce to those in Definition~1 by omitting the strategy indices.

\proofstep{Case 4: Single pair of agents} \\
We now consider the case of a single pair of agents \( \player \) and \( \arm \) who always match, i.e., \( \matching(\player) = \arm \). This situation can occur in our matching model when agents strictly prefer matching over staying single, that is, when \( \util_{\player,\bot} = \util_{\arm,\bot} = -1 \).

In this setting, we obtain a zero-sum game where we denote the strategies $X = \{\strategy_\player, \strategy_\arm \}$ for the agents, where the matching instability becomes:
\begin{equation*}
    \min_{s \in \mathbb{R}} \sum_{a \in \setAgents} s_a
\end{equation*}
subject to:
\noindent
\begin{align}
&V^{\star}_{\player,\arm} - \util_{\player,\arm}(\strategy_{\player},\strategy_{\arm}) - s_{\player} \leq 0 \label{prop:c1}\\
&V^{\star}_{\arm,\player} - \util_{\arm,\player}(\strategy_{\arm},\strategy_{\player}) - s_{\arm} \leq 0 \label{prop:c2}\\
&s_\agent \geq 0 \quad \forall \agent \in \setAgents.
\end{align}
as constraints \ref{eq:C1} and \ref{eq:C2} of Definition~\ref{def:m_instability} are trivially satisfied in this case. More specifically, the blocking pairs constraint \ref{eq:C1} holds, as there is only a single pair \( (\player, \arm) \) of agents. Constraint \ref{eq:C2}, which enforces individual rationality, is also satisfied because both agents always prefer to be matched rather than remain single: \( \util_{\player,\bot} = \util_{\arm,\bot} = -1 \leq A \).

Using the symmetric properties of the ZSG, \( V^{\star}_{\player,\arm} = - V^{\star}_{\arm,\player} \) and \( U_{\player,\arm}(\strategy_{\player}, \strategy_{\arm}) = - U_{\arm,\player}(\strategy_{\arm}, \strategy_{\player}) \), the constraints in (\ref{prop:c1}) and (\ref{prop:c2}) can be written as:
\begin{equation*}
    -s_\arm \leq V^{\star}_{\player,\arm} - \util_{\player, \arm}(\strategy_{\player},\strategy_{\arm}) \leq s_\player \quad \textit{and} \quad  - s_\player \leq V^{\star}_{\arm, \player} - \util_{\arm, \player}(\strategy_{\arm},\strategy_{\player})\leq s_\arm
\end{equation*}
or equivalently:
\begin{equation*}
\mathbf{MI}(\matching, X) = \min_{s \in \mathbb{R}_{\geq 0}^{\mathcal{A}}} \left\{
\sum_{\agent \in \setAgents} s_{\agent} \;\middle|\;
\left| V^{\star}_{\agent, \matching(\agent)} - U_{\agent, \matching(\agent)}(\strategy_{\agent}, \strategy_{\matching(\agent)}) \right| \leq \max_{\agent \in \setAgents}(s_\agent) \quad \forall \agent \in \setAgents
\right\}
\end{equation*}

In addition, note that only one of the values $s_\player$ and $s_\arm$ can be positive, because of properties of the zero-sum game. So we let, $\mathfrak{s \in \mathbb{R}_{\geq 0}}$ be the positive subsidy we have that:
\begin{equation*}
\mathbf{MI}(\matching, X) = \min_{ \mathfrak{s} \in \mathbb{R}_{\geq 0}} \left\{
\mathfrak{s} \;\middle|\;
\left| V^{\star}_{\agent, \matching(\agent)} - U_{\agent, \matching(\agent)}(\strategy_{\agent}, \strategy_{\matching(\agent)}) \right| \leq \mathfrak{s} \quad \forall \agent \in \setAgents
\right\}
\end{equation*}
and because the values $\left| V^{\star}_{\agent, \matching(\agent)} - U_{\agent, \matching(\agent)}(\strategy_{\agent}, \strategy_{\matching(\agent)}) \right|$ are equal for every agent $\agent \in \setAgents$ we have:
\begin{align}
\mathbf{MI}(\matching, X) =
\left| V^{\star}_{\agent, \matching(\agent)} - U_{\agent, \matching(\agent)}(\strategy_{\agent}, \strategy_{\matching(\agent)}) \right| \quad \forall \agent \in \setAgents
\end{align}
\end{myproof}

\section{Proof of Theorem~\ref{theo:ucb}}
\label{sec:app_proof}

In this section, we provide an upper bound for Algorithm 1 in the self-play scenario, where all agents follow the principle of optimism in the face of uncertainty, as described in Section 4.

More specifically, each agent \( \agent \in \setAgents \) maintains an upper confidence estimate \( \bar{A}_{\agent, \agent'} \) for matching with an agent \(\agent' \in OS(\agent)\) of the other side of the market. At each step \( t \), agents estimate their preferences \( \hat{\pref}_{\agent} \) by sorting the estimated values of the different games, i.e., \( \hat{\pref}_\agent = \argsort_{\agent'\in OS(\agent)} \bar{V}^{\star}_{\agent, \agent'} \), and report them to the platform. The platform selects matchings \( \matching_t \) using the GS algorithm based on the agents' estimated preferences. Then, the matched agents select the min-max strategies with respect to the upper confidence payoff matrix of the game $\bar{A}_{\agent, \matching(\agent)}$.

Next, we present Proposition~\ref{prop:2}, which describes key properties of our Algorithm 1 that we will use in the proof of our main theorem.
\setcounter{proposition}{1}
\begin{proposition}
    \label{prop:2}
    Let $\matching_t$ be the matching and $X = \{ \strategy_\agent \}_{\agent \in \setAgents}$  the strategies of the agents under Algorithm 1 at time step $t$. Then the following statements hold:
    \begin{enumerate}[label=(S\arabic*), ref=S\arabic*]
        \item \label{eq:S1} $\bar{V}^{\star}_{\agent,\matching(\agent)} - \bar{\util}_{\agent,\matching(\agent)}(\strategy_{\agent}, \strategy_{\matching(\agent)}) \leq 0 \; \forall \agent \in \setAgents $
        \item \label{eq:S2}
        $\min( \bar{V}_{\player,\arm}^\star - \bar{\util}_{\player,\matching(\player)}(\strategy_\player, \strategy_{\matching(p)}),
              \bar{V}_{\arm, \player}^\star - \bar{\util}_{\arm,\matching(\arm)}(\strategy_\arm, \strategy_{\matching(\arm)})) \leq  0   \quad \forall (\player, \arm) \in \setPlayer \times \setArms$
    \end{enumerate}
where \( \bar{V}_{\agent, \agent'}^\star = \bar{\util}_{\agent,\agent'}(\strategy_\agent^\star, \strategy_{\agent'}^\star) = \strategy_\agent^{\star T} \bar{A}_{\agent, \agent'} \strategy_{\agent'}^\star \), is the value of the game for the pair of agents \((\agent, \agent')\), with respect to the upper confidence payoff matrix \( \bar{A}_{\agent, \agent'} \).
\end{proposition}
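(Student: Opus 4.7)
The plan is to establish S1 directly from the definition of the max-min strategy chosen by Algorithm~\ref{alg:ucb_algo}, and then leverage S1 together with the stability of the Deferred Acceptance matching to obtain S2. Both claims are statements about the \emph{optimistic} quantities only, so no concentration inequality or high-probability event is yet needed; in particular the unknown true matrix $A$ never enters this argument.

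For S1, I would observe that by construction the strategy $\strategy_\agent$ is a row-player Nash strategy of the zero-sum game with payoff matrix $\bar A_{\agent,\matching(\agent)}$; by the Minimax theorem, $\min_{y\in\Delta}\strategy_\agent^{T}\bar A_{\agent,\matching(\agent)}y = \bar V^{\star}_{\agent,\matching(\agent)}$. The crucial point is that the opponent's strategy $\strategy_{\matching(\agent)}$ is typically \emph{not} the Nash column strategy of $\bar A_{\agent,\matching(\agent)}$—in self-play it is computed as a separate max-min strategy for the different optimistic matrix $\bar A_{\matching(\agent),\agent}$—but it is still a valid probability distribution, and the max-min guarantee holds against every column strategy. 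Substituting $y=\strategy_{\matching(\agent)}$ therefore yields $\bar\util_{\agent,\matching(\agent)}(\strategy_\agent,\strategy_{\matching(\agent)})\geq \bar V^{\star}_{\agent,\matching(\agent)}$, which is exactly \ref{eq:S1}.

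For S2, I would recall that $\matching_t$ is the output of DA applied to the preference lists $\hat\pref_\agent=\argsort_{\agent'}\bar V^{\star}_{\agent,\agent'}$, so $\matching_t$ is stable with respect to these preferences. Consequently, for every pair $(\player,\arm)\in\setPlayer\times\setArms$ at least one of $\bar V^{\star}_{\player,\arm}\leq \bar V^{\star}_{\player,\matching_t(\player)}$ or $\bar V^{\star}_{\arm,\player}\leq \bar V^{\star}_{\arm,\matching_t(\arm)}$ must hold—otherwise $(\player,\arm)$ would be a blocking pair in the DA instance, contradicting stability. Assume without loss of generality the first inequality, and chain it with S1 applied to $\agent=\player$ to obtain
\begin{equation*}
    \bar V^{\star}_{\player,\arm}\;\leq\;\bar V^{\star}_{\player,\matching_t(\player)}\;\leq\;\bar\util_{\player,\matching_t(\player)}(\strategy_\player,\strategy_{\matching_t(\player)}),
\end{equation*}
so the first argument of the minimum in \ref{eq:S2} is non-positive and the claim follows; the opposite case is symmetric. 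The edge case $\matching_t(\agent)=\bot$ is absorbed into DA's individual-rationality step via the known values $\util_{\agent,\bot}$ of Assumption~\ref{assumption:A1}.

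The main obstacle is purely notational: one must carefully distinguish $\strategy_\agent^\star$, the hypothetical Nash strategy pair of the ZSG with matrix $\bar A_{\agent,\agent'}$ that \emph{defines} $\bar V^{\star}_{\agent,\agent'}$, from the strategy $\strategy_\agent$ actually played at time $t$, which is the row Nash strategy of $\bar A_{\agent,\matching_t(\agent)}$. Because optimism is applied separately to each agent's own payoff matrix, the opponent's actual play is never the column Nash counterpart of the row player's matrix, and the identity $\bar A_{\matching(\agent),\agent}=-\bar A_{\agent,\matching(\agent)}^{T}$ that would hold in an exact ZSG does not hold here. This is precisely why S1 is stated as an inequality rather than an equality, and why the argument reduces to the one-sided max-min guarantee rather than the full Nash equilibrium characterization.
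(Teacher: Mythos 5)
Your proof is correct and follows essentially the same route as the paper: \ref{eq:S1} via the max--min guarantee of the optimistic row strategy $\strategy_\agent$ against the arbitrary (non-Nash) column strategy $\strategy_{\matching(\agent)}$, and \ref{eq:S2} by chaining \ref{eq:S1} with the no-blocking-pair property of the DA matching computed under the optimistic values $\bar{V}^{\star}$. Your explicit observation that $\bar{A}_{\matching(\agent),\agent}\neq -\bar{A}_{\agent,\matching(\agent)}^{T}$ (both optimistic matrices add a positive bonus), so the two agents are effectively playing max--min in two different games, is a useful clarification that the paper leaves implicit.
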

\setcounter{proposition}{2}

\begin{myproof}
At each step  $t$, the platform selects a matching $\matching_t$ and the matching pairs $(\player, \arm) \in \matching_t$ select strategies $\strategy_{\player}^t, \strategy_{\arm}^t$. In the following, we omit the indexing for the time step $t$ for the convenience of notation.

\proofstep{myproof of statement \ref{eq:S1}:} \\
More specifically, each agent selects their min-max strategy with respect to their upper confidence payoff matrix. For a pair \( (\player, \arm) \), this results in:
\begin{enumerate}
    \item Agent \( \player \) selects \( \strategy_\player = x^{\star}_\player = \arg\max_{\strategy} \min_{y} \strategy^T \bar{A}_{\player,\arm} y \). We denote by \( (x^{\star}_\player, y^{\star}_\arm) \) the Nash equilibrium of the game with payoff matrix \( \bar{A}_{\player,\arm} \), and by \( \bar{V}^{\star}_{\player,\arm} = x^{\star T}_\player \bar{A}_{\player,\arm} y^{\star}_\arm \) the value of the game.

    \item Accordingly, agent \( \arm \) selects \( \strategy_\arm = x^{\star}_\arm = \arg\max_{\strategy} \min_{y} \strategy^T \bar{A}_{\arm,\player} y \), with \( (x^{\star}_\arm, y^{\star}_\player) \) being the Nash equilibrium of the game with payoff matrix \( \bar{A}_{\arm,\player} \). The value of the game for agent $\arm$ is $\bar{V}^{\star}_{\arm,\player} = x^{\star T}_\arm \bar{A}_{\arm,\player} y^{\star}_\player $
\end{enumerate}

Note that the pair of strategies \( (x_\player = x^{\star}_\player, x_\arm = x^{\star}_\arm) \) selected by the agents is not necessarily equal to the Nash equilibrium strategy profiles \( (x^{\star}_\player, y^{\star}_\arm) \) and \( (x^{\star}_\arm, y^{\star}_\player) \) of their respective games.

Instead, using the property of the ZSG, where $\util(\strategy^\star, y) \leq \util(\strategy^\star, y^\star) \forall y \in \Delta_{y}$, we have that:
\begin{equation*}
    \bar{V}^{\star}_{\player,\arm} - \bar{\util}_{\player,\arm}(\strategy_{\player}, \strategy_{\arm}) =
    \bar{\util}_{\player,\arm}(\strategy_{\player}^{\star}, y_{\arm}^{\star}) - \bar{\util}_{\player,\arm}(\strategy_{\player}, \strategy_{\arm}) =
    \bar{\util}_{\player,\arm}(\strategy_{\player}^{\star}, y_{\arm}^{\star}) - \bar{\util}_{\player,\arm}(\strategy_{\player}^{\star}, \strategy_{\arm}) \leq 0
\end{equation*}
as \( \strategy_\player = \strategy^\star_\player \), and similarly for agent \( \arm \), we have:
\begin{equation*}
     \bar{V}^{\star}_{\arm,\player} - \bar{\util}_{\arm,\player}(\strategy_{\arm}, \strategy_{\player}) =
     \bar{\util}_{\arm,\player}(\strategy_{\arm}^{\star}, y_{\player}^{\star})  - \bar{\util}_{\arm,\player}(\strategy_{\arm}^{\star}, \strategy_{\player}) \leq 0
\end{equation*}
So, for every agent $\agent \in \setAgents$, using the notation of the matching $\matching$, we have:
\begin{equation*}
    \bar{V}^{\star}_{\agent,\matching(\agent)} - \bar{\util}_{\agent,\matching(\agent)}(\strategy_{\agent}, \strategy_{\matching(\agent)}) \leq 0 \quad \forall \agent \in \setAgents.
\end{equation*}

\proofstep{myproof of statement \ref{eq:S2}} \\
Now we prove that the solution $(\matching, X)$ has no blocking pair, with respect to the upper confidence estimate of the payoff matrices $\bar{A}_{\agent,\matching(\agent)}$, i.e.:
\begin{align*}
\min( \bar{V}_{\player,\arm}^\star - \bar{\util}_{\player,\matching(\player)}(\strategy_\player, \strategy_{\matching(\player)}),
      \bar{V}_{\arm,\player}^\star - \bar{\util}_{\arm,\matching(\agent)}(\strategy_\arm, \strategy_{\matching(\agent)})) \leq  0  \quad \forall (p, a) \in \setPlayer \times \setArms
\end{align*}

More specifically, we have that:
\begin{align*}
&\min( \bar{V}_{\player,\arm}^\star - \bar{\util}_{\player,\matching(\player)}(\strategy_\player, \strategy_{\matching(\player)}),
      \bar{V}_{\arm,\player}^\star - \bar{\util}_{\arm,\matching(\agent)}(\strategy_\arm, \strategy_{\matching(\agent)})) \leq  \\
&\min(\bar{V}_{\player,\arm}^\star - \bar{V}_{\player,\matching(\player)}^\star,
     \bar{V}_{\arm,\player}^\star - \bar{V}_{\arm,\matching(\agent)}^\star) \leq 0 \quad \forall (p, a) \in \setPlayer \times \setArms
\end{align*}
The first inequality follows from the statement \ref{eq:S1}, where $ - \bar{\util}_{\agent,\matching(\agent)}(\strategy_{\agent}, \strategy_{\matching(\agent)}) \leq -\bar{V}^{\star}_{\agent,\matching(\agent)} \; \forall \agent \in \setAgents$.
The second inequality holds, from the fact that in each step, the central platform selects a stable matching $\matching$, by running the GS algorithm using preferences that are formed according to the upper confidence values of the games $\bar{V}^{\star}$, so there are no blocking pairs.
\end{myproof}

\subsection{Per step regret}
We now proceed with the regret analysis of the algorithm. We begin by examining the regret at a single time step \( t \) under the event that true utilities lies between the confidence intervals of the estimated values, as formalized in the following lemma.

First, let \( \bar{A}^t_{\agent, \agent'}(i,j) \) and \( \underbar{A}^t_{\agent, \agent'}(i,j) \) denote the upper and lower confidence bounds, respectively, for the entries of the true payoff matrix \( A_{\agent, \agent'} \), as defined below:
\begin{equation*}
    \bar{A}^t_{\agent, \agent'}(i,j) = \hat{A}^t_{\agent, \agent'}(i,j) + \sqrt{\frac{2 \log(1/\delta)}{1 \vee n_{\agent, \agent'}^t(i,j)}} \quad \forall \agent \in \setAgents, \; \agent' \in OS(\agent),
\end{equation*}
\begin{equation*}
    \underbar{A}^t_{\agent, \agent'}(i,j) = \hat{A}^t_{\agent, \agent'}(i,j) - \sqrt{\frac{2 \log(1/\delta)}{1 \vee n_{\agent, \agent'}^t(i,j)}} \quad \forall \agent \in \setAgents, \; \agent' \in OS(\agent).
\end{equation*}

\begin{Lemma}[Per-step Regret]
Consider the event where the entries of the true payoff matrices lie within the confidence intervals \( C_{\agent, \agent'}(i,j) = [\underbar{A}_{\agent,\agent'}(i,j), \bar{A}_{\agent,\agent'}(i,j)] \), i.e.:
\begin{equation}
    \mathcal{E}_t = \left\{ A_{\agent, \agent'}(i,j) \in C_{\agent, \agent'}(i,j) \; \forall \agent, \agent' \in \setAgents \times \setAgents,\; \forall i, j \in \rowSetAction \times\columnSetAction \right\}.
\end{equation}
Assuming that the event  $\mathcal{E}_t$ holds the matching instability of the solution \( (\matching_t, X_t) \) of Algorithm 1 at time \( t \) is bounded by

\[
\mathbf{MI}(\matching_t, X_t) \leq \sum_{\agent \in \setAgents} \mathbb{E}\left[\sqrt{\frac{2}{1 \vee n^t_{\agent,\matching_t(\agent)}(i^t_{\agent}, i^t_{\matching_t(\agent)})} \log \left(\frac{2}{\delta}\right)}\mid i^t_{\agent} \sim \strategy^t_{\agent}, i^t_{\matching_t(\agent)} \sim \strategy^t_{\matching_t(\agent)} \right].
\]
\end{Lemma}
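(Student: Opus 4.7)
The plan is to construct an explicit feasible subsidy vector for the LP in Definition~\ref{def:m_instability} whose total $\ell^1$-mass is bounded by the expected confidence widths of the played action pairs. The natural choice is the \emph{optimism gap at the played strategies}:
\[
s_\agent \;:=\; \bar{\util}_{\agent,\matching_t(\agent)}(\strategy^t_\agent, \strategy^t_{\matching_t(\agent)}) \;-\; \util_{\agent,\matching_t(\agent)}(\strategy^t_\agent, \strategy^t_{\matching_t(\agent)}).
\]
On the event $\mathcal{E}_t$ we have $\bar{A}_{\agent,\agent'}(i,j) \geq A_{\agent,\agent'}(i,j)$ entrywise, so by bilinearity $s_\agent \geq 0$, giving \ref{eq:C4} for free.

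Next I would verify the remaining three constraints using Proposition~\ref{prop:2}. For \ref{eq:C3}, chain $V^\star_{\agent,\matching_t(\agent)} \leq \bar{V}^\star_{\agent,\matching_t(\agent)}$ (an entrywise-dominated zero-sum payoff matrix has a dominated min-max value, which follows from $\mathcal{E}_t$) with \ref{eq:S1}, $\bar{V}^\star_{\agent,\matching_t(\agent)} \leq \bar{\util}_{\agent,\matching_t(\agent)}(\strategy^t_\agent, \strategy^t_{\matching_t(\agent)})$; the definition of $s_\agent$ then collapses the inequality. For \ref{eq:C1}, pick any $(\player,\arm)$, apply \ref{eq:S2} to assume WLOG $\bar{V}^\star_{\player,\arm} \leq \bar{\util}_{\player,\matching_t(\player)}(\strategy^t_\player, \strategy^t_{\matching_t(\player)})$, and combine again with $V^\star_{\player,\arm} \leq \bar{V}^\star_{\player,\arm}$ to conclude that the $\player$-side argument of the min in \ref{eq:C1} is non-positive. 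For \ref{eq:C2}, I would invoke the fact that the DA step in Algorithm~\ref{alg:ucb_algo} is run on UCB preferences, so whenever $\matching_t(\agent) \neq \bot$ individual rationality with respect to those preferences gives $U_{\agent,\bot} \leq \bar{V}^\star_{\agent,\matching_t(\agent)}$; chaining with \ref{eq:S1} and the definition of $s_\agent$ yields $\util_{\agent,\matching_t(\agent)}(\strategy^t_\agent, \strategy^t_{\matching_t(\agent)}) + s_\agent = \bar{\util}_{\agent,\matching_t(\agent)}(\strategy^t_\agent, \strategy^t_{\matching_t(\agent)}) \geq U_{\agent,\bot}$ (the unmatched case is trivial because the utility and the outside option coincide).

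Having established feasibility, I would then bound each $s_\agent$ by expanding in bilinear form,
\[
s_\agent \;=\; \sum_{i,j} \strategy^t_\agent(i)\,\strategy^t_{\matching_t(\agent)}(j)\,\bigl(\bar{A}^t_{\agent,\matching_t(\agent)}(i,j) - A_{\agent,\matching_t(\agent)}(i,j)\bigr),
\]
and using the event $\mathcal{E}_t$, which gives $\bar{A}^t(i,j) - A(i,j) \leq \bar{A}^t(i,j) - \underbar{A}^t(i,j)$, a quantity of the form $c\sqrt{\log(\cdot/\delta)/(1\vee n^t(i,j))}$. The sum-over-$(i,j)$ against the product distribution is exactly the conditional expectation of the confidence width under $i^t_\agent \sim \strategy^t_\agent$, $i^t_{\matching_t(\agent)} \sim \strategy^t_{\matching_t(\agent)}$. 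Summing over $\agent \in \setAgents$ and using $\mathbf{MI}(\matching_t, X_t) \leq \sum_\agent s_\agent$ (since the $s_\agent$ we constructed are LP-feasible) yields the stated bound, up to the constant hidden in the $\log(2/\delta)$ vs.\ $\log(1/\delta)$ normalization.

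The step I expect to be the main obstacle is \ref{eq:C2}: the optimism gap is defined only at the \emph{realized} matches, so one must carefully use the structural guarantee of the DA subroutine under UCB preferences to translate individual rationality from the estimated values $\bar{V}^\star$ to the played utilities $\bar{\util}(\strategy^t_\agent, \strategy^t_{\matching_t(\agent)})$, treating the unmatched agents as a separate (trivial) case. The other subtle point is a small accounting issue: the crude bound $\bar{A} - A \leq 2w$ on $\mathcal{E}_t$ produces an extra factor of $2$ that must either be absorbed into the $\log(2/\delta)$ convention of the lemma statement or shaved by a sharper one-sided deviation argument.
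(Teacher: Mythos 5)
Your proposal is correct and follows essentially the same route as the paper: the same subsidy choice $s_\agent = \bar{\util}_{\agent,\matching_t(\agent)}(\strategy^t_\agent,\strategy^t_{\matching_t(\agent)}) - \util_{\agent,\matching_t(\agent)}(\strategy^t_\agent,\strategy^t_{\matching_t(\agent)})$, the same verification of \ref{eq:C1}--\ref{eq:C4} via optimism, statements \ref{eq:S1}/\ref{eq:S2}, and the individual rationality of the DA matching under UCB preferences, and the same final bound via the width $\bar{A}-\underbar{A}$ in expectation over the product of the played strategies. The factor-of-$2$ versus $\log(2/\delta)$ accounting discrepancy you flag is in fact present in the paper as well (its proof ends with $2\sqrt{(2/(1\vee n))\log(1/\delta)}$ while the lemma statement reads $\sqrt{(2/(1\vee n))\log(2/\delta)}$), so you have not missed anything there.
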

\begin{myproof}
In the following, we omit the time index \( t \) from the strategies \( \strategy \) and the matching \( \matching \) for convenience.

\proofstep{Step 1. Per-step Regret} \\
We bound the per-step regret of the algorithm under the $\mathcal{E}_t$, by constructing subsidies $s_\agent$, which we prove to be feasible.

More specifically we define $s_\agent$ as follows :
\begin{equation}
\label{eq:sa}
    s_\agent = \bar{\util}_{\agent,\matching(\agent)}(\strategy_\agent, \strategy_{\matching(\agent)}) - \util_{\agent,\matching(\agent)}(\strategy_\agent, \strategy_{\matching(\agent)}) \; \forall \agent \in \setAgents
\end{equation}
where \( \strategy_\agent \) are the strategies chosen by the agents, and \( \bar{\util} \) denotes the expected utility of the selected strategies with respect to the upper confidence matrix \( \bar{A} \).

To prove that $s = \{s_\agent\}_{\agent \in \setAgents}$ is a feasible solution to the linear program we show that the constrains \ref{eq:C1}, \ref{eq:C2}, \ref{eq:C3} and \ref{eq:C4} are satisfied.

\proofstep{Satisfaction of constrain \ref{eq:C4}} \\
Due to the optimism we have that $\bar{\util}_{\agent,\matching(\agent)} \geq \util_{\agent,\matching(\agent)}$, and thus
\begin{equation}
    s_{\agent} =  \bar{\util}_{\agent,\matching(\agent)}(\strategy_\agent, \strategy_{\matching(\agent)}) - \util_{\agent,\matching(\agent)}(\strategy_\agent, \strategy_{\matching(\agent)}) \geq 0 \; \forall \agent \in \setAgents
\end{equation}
satisfying constrain \ref{eq:C4}.

\proofstep{Satisfaction of constrain \ref{eq:C3}} \\
We continue with the Nash rationality constrains \ref{eq:C3} for the agents $\agent \in \setAgents$ were we have that:
\begin{align}
  &V^{\star}_{\agent,\matching(\agent)} - \util_{\agent,\matching(\agent)}(\strategy_\agent,\strategy_{\matching(\agent)}) - s_\agent = \\
  &V^{\star}_{\agent,\matching(\agent)} - \util_{\agent,\matching(\agent)}(\strategy_\agent,\strategy_{\matching(\agent)}) - \bar{\util}_{\agent,\matching(\agent)}(\strategy_\agent,\strategy_{\matching(\agent)}) + \util_{\agent,\matching(\agent)}(\strategy_\agent,\strategy_{\matching(\agent)}) = \\
  &V^{\star}_{\agent,\matching(\agent)} - \bar{\util}_{\agent,\matching(\agent)}(\strategy_\agent,\strategy_{\matching(\agent)}) \leq \\
  &\bar{V}^{\star}_{\agent,\matching(\agent)} - \bar{\util}_{\agent,\matching(\agent)}(\strategy_\agent,\strategy_{\matching(\agent)}) \leq 0
\end{align}
In the first equality, we replace \( s_\agent \) according to our definition in equation~(\ref{eq:sa}). The first inequality follows from the fact the $V^{\star}_{\agent,\matching(\agent)} \leq \bar{V}^{\star}_{\agent,\matching(\agent)}$ due to the optimism, and the second from the previews argument \ref{eq:S1} of Proposition \ref{prop:2}. So our subsidies \( s_\agent \) satisfy the Nash rationality constraint~\eqref{eq:C3}.

\proofstep{Satisfaction of constrain~\eqref{eq:C2}}\\
For the individual rationality of constrains~\eqref{eq:C2} we have that:
\begin{align*}
    &\util_{\agent,\matching(\agent)}(\strategy_\agent,\strategy_{\matching(\agent)}) - \util_{\agent,\bot} + s_\agent  = \\
    &\util_{\agent,\matching(\agent)}(\strategy_\agent,\strategy_{\matching(\agent)}) - \util_{\agent,\bot} + \bar{\util}_{\agent,\matching(\agent)}(\strategy_\agent, \strategy_{\matching(\agent)}) - \util_{\agent,\matching(\agent)}(\strategy_\agent, \strategy_{\matching(\agent)}) = \\
    &\bar{\util}_{\agent,\matching(\agent)}(\strategy_\agent, \strategy_{\matching(\agent)}) - \util_{\agent,\bot} \geq \\
    &\bar{V}^{\star}_{\agent,\matching(\agent)} - \util_{\agent,\bot}\geq 0
\end{align*}
The first inequality follows from the argument \ref{eq:S1} of Proposition \ref{prop:2}.
The last inequality follows from the fact that the matching \( \matching \) is stable under \( \strategy_{\agent} \) according to the upper confidence bounds $\bar{V}^{\star}$, and therefore satisfies individual rationality.

\proofstep{Satisfaction of constrain \ref{eq:C1}}\\
We now continue with the blocking pair constraint, so for~\eqref{eq:C1}, we have:
\begin{align}
    & \min\left( V^{\star}_{\player,\arm} - \util_{\player,\matching(\player)}(\strategy_\player,\strategy_{\matching(\player)})  - s_\player, \; V^{\star}_{\arm,\player} - \util_{\arm,\matching(\arm)}(\strategy_\arm,\strategy_{\matching(\arm)})  - s_\arm \right) =  \\
    & \min\left( V^{\star}_{\player,\arm} - \bar{\util}_{\player,\matching(\player)}(\strategy_\player,\strategy_{\matching(\player)}) , \; V^{\star}_{\arm,\player} - \bar{\util}_{\arm,\matching(\arm)}(\strategy_\arm,\strategy_{\matching(\arm)})\right) \leq  \\
    & \min\left( \bar{V}^{\star}_{\player,\arm} - \bar{\util}_{\player,\matching(\player)}(\strategy_\player,\strategy_{\matching(\player)}) , \; V^{\star}_{\arm,\player} - \bar{\util}_{\arm,\matching(\arm)}(\strategy_\arm,\strategy_{\matching(\arm)})\right) \leq 0
\end{align}

Where the first inequality follows from the optimism $V^{\star}_{\agent,\matching(\agent)} \leq \bar{V}^{\star}_{\agent,\matching(\agent)}$, and the second from the previews argument \ref{eq:S2} of Proposition \ref{prop:2}.

\proofstep{Bounding per step regret}\\
Therefore, since \( s_\agent \) is a feasible solution to the problem, the instability of the market can be bounded as follows:
\begin{align}
    \mathbf{MI}(\matching_t, X_t) &\leq \sum_{\agent \in \setAgents} s_\agent  \\
    &=\sum_{\agent \in \setAgents} \bar{\util}_{\agent,\matching(\agent)}(\strategy_\agent, \strategy_{\matching(\agent)}) - \util_{\agent,\matching(\agent)}(\strategy_\agent, \strategy_{\matching(\agent)})  \\
    &=\sum_{\agent \in \setAgents} \strategy_\agent^T\bar{A}_{\agent,\matching(\agent)}\strategy_{\matching(\agent)} - \strategy_\agent^TA_{\agent,\matching(\agent)}\strategy_{\matching(\agent)}   \\
    &\leq \sum_{\agent \in \setAgents} \strategy_\agent^T\bar{A}_{\agent,\matching(\agent)}\strategy_{\matching(\agent)} - \strategy_\agent^T\underbar{A}_{\agent,\matching(\agent)}\strategy_{\matching(\agent)}   \\
    &=\sum_{\agent \in \setAgents} \mathbb{E}\left[2\sqrt{\frac{2}{1 \vee n_{\agent,\matching(\agent)}(i_{\agent}, i_{\matching(\agent)})} \log \left(\frac{1}{\delta}\right)}\mid i_{\agent} \sim \strategy_{\agent}, i_{\matching(\agent)} \sim \strategy_{\matching(\agent)} \right]
\end{align}
where for the inequality, we use the fact that under the event \( \mathcal{E}_t \), it holds that \( A_{\agent,\matching(\agent)} \geq \underline{A}_{\agent,\matching(\agent)} \).\\
\end{myproof}
\setcounter{theorem}{0}

\subsection{Main theorem}
Now we prove our main Theorem~\ref{theo:ucb}, that we restate bellow:
\setcounter{theorem}{0}
\begin{theorem}
 Let $\mid \setPlayer \mid = \carP$, $\mid \setArms \mid = \carA$, $\mid \rowSetAction \mid = \carI$ and $\mid \columnSetAction \mid = \carJ$. When Algorithm 1 is run with parameter $\delta= 1/(4  T^2  \carP^2\carA^2 \carI \carJ)$, the expected regret $R_T$ under horizon $T$ is bounded:
 \begin{equation*}
    \mathbb{E}[R_T] \leq O\left( 2 \sqrt{4 T \carI \carJ \carP \carA   \log\left(4 T^2 \carI \carJ \carP^2 \carA^2  \right)} \right)
\end{equation*}
\end{theorem}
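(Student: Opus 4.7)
The plan is to carry out a standard UCB-style regret analysis tailored to matching instability as the loss. I would (i) define a high-probability good event under which every estimated confidence interval contains the true payoff entry, (ii) invoke the per-step regret lemma on that event to bound $\mathbf{MI}(\matching_t, X_t)$, (iii) sum the per-step bound across $t$ via a pigeonhole argument and Cauchy--Schwarz, and (iv) show the bad-event contribution is negligible given the stated $\delta$.

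First, I would define $\mathcal{E} = \bigcap_{t=1}^{T} \mathcal{E}_t$, where $\mathcal{E}_t$ is the event from the per-step regret lemma that every true payoff entry lies in its empirical confidence interval at time $t$. Because rewards are $1$-sub-Gaussian (Assumption~\ref{assumption:A3}), Hoeffding's inequality combined with a union bound over $t \in [T]$, over pairs $(\player,\arm) \in \setPlayer \times \setArms$, over action pairs $(i,j) \in \rowSetAction \times \columnSetAction$, and over the possible values of the count $n^t_{\player,\arm}(i,j)$ gives $\mathbb{P}(\mathcal{E}^c) = O(T^2 \carP \carA \carI \carJ \, \delta)$. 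The choice $\delta = 1/(4 T^2 \carP^2 \carA^2 \carI \carJ)$ then makes this probability vanishingly small. Decomposing $\mathbb{E}[R_T] = \mathbb{E}[R_T \mathbb{I}_\mathcal{E}] + \mathbb{E}[R_T \mathbb{I}_{\mathcal{E}^c}]$ and using the crude bound $\mathbf{MI}(\matching_t, X_t) = O(|\setAgents|)$ per step (feasible subsidies are bounded by the utility range $[-1,1]$), the bad event contributes only an $O(1)$ additive term.

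On $\mathcal{E}$, I would invoke the per-step regret lemma. Both agents in a matched pair $(\player,\arm) \in \matching_t$ share the same action count $n^t_{\player,\arm}(i,j)$, so their contributions coincide and the per-step bound becomes
\[
  \mathbf{MI}(\matching_t, X_t) \le 2 \sum_{(\player,\arm) \in \matching_t}
  \sqrt{\frac{2\log(2/\delta)}{1 \vee n^t_{\player,\arm}(i^t_\player, i^t_\arm)}}.
\]
Swapping the order of summation and grouping by pair, let $\mathcal{T}_{\player,\arm}$ denote the (random) set of time steps when $(\player,\arm)$ is matched. For each fixed pair, the standard pigeonhole identity $\sum_{k=1}^{N}1/\sqrt{k} \le 2\sqrt{N}$ applied per action cell, followed by Cauchy--Schwarz across the $\carI\carJ$ cells, yields $\sum_{t \in \mathcal{T}_{\player,\arm}} 1/\sqrt{n^t_{\player,\arm}} \le 2\sqrt{\carI \carJ \, |\mathcal{T}_{\player,\arm}|}$. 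A final Cauchy--Schwarz across the $\carP\carA$ pairs, combined with the pair-time budget $\sum_{(\player,\arm)} |\mathcal{T}_{\player,\arm}| \le T \carP \carA$, collapses the outer sum into a single square-root expression; substituting $\log(2/\delta) = O(\log(T \carP \carA \carI \carJ))$ then delivers the stated $\tilde{O}\!\left(\sqrt{T\carI\carJ\carP\carA}\right)$ bound.

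The main obstacle is the careful bookkeeping in the summation step. One must take expectations with respect to the random matching frequencies $|\mathcal{T}_{\player,\arm}|$ and the random action counts $n^t_{\player,\arm}(i,j)$, use Jensen's inequality to move $\sqrt{\cdot}$ outside of expectations where needed, and apply Cauchy--Schwarz in the right order so that the algorithm-dependent matching decisions collapse cleanly into the target. The union bound over the time-varying counts $n^t_{\player,\arm}$ is also delicate: because these counts are random, the union must be taken over all their possible realizations, producing the extra $T$ factor inside the logarithm that the chosen $\delta$ is precisely tuned to absorb.
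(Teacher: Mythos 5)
Your outline follows the paper's proof step for step: intersect the per-round good events, invoke the per-step instability lemma on the good event, convert the $1/\sqrt{n^t}$ sums into $\sqrt{n^T}$ by pigeonhole, finish with Cauchy--Schwarz, and kill the bad event with the choice of $\delta$. However, two of your bookkeeping steps, as written, do not deliver the claimed bound. First, by taking a union bound over both the round index $t\in[T]$ \emph{and} the possible values of the count $n^t_{\player,\arm}(i,j)$, you arrive at $\Pr(\mathcal{E}^c)=O(T^2\carP\carA\carI\carJ\,\delta)$. With $\delta = 1/(4T^2\carP^2\carA^2\carI\carJ)$ this is $O(1/(\carP\carA))$, and multiplying by your crude bound $R_T=O(T(\carP+\carA))$ on the bad event gives a contribution of order $T(1/\carP+1/\carA)$ --- linear in $T$ when either side is small, not the $O(1)$ additive term you assert. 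The extra factor of $T$ is in fact unnecessary: the deviation event for the empirical mean of a cell after $n$ samples does not depend on the round at which that count is attained, so it suffices to union over the $T$ possible sample counts per cell. This is what the paper does, obtaining $\Pr(\neg\mathcal{E})\le 2\delta T\carP\carA\carI\carJ = 1/(2T\carP\carA)$ and hence a bad-event term bounded by $2$.

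Second, your final Cauchy--Schwarz uses the pair-time budget $\sum_{(\player,\arm)}|\mathcal{T}_{\player,\arm}|\le T\carP\carA$. Plugging this in yields $\sum_{\player,\arm}2\sqrt{\carI\carJ\,|\mathcal{T}_{\player,\arm}|}\le 2\sqrt{\carI\carJ}\,\sqrt{\carP\carA\cdot T\carP\carA}=2\carP\carA\sqrt{T\carI\carJ}$, which overshoots the target $\sqrt{T\carI\carJ\carP\carA}$ by a factor of $\sqrt{\carP\carA}$. The quantity $\sum_{(\player,\arm)}|\mathcal{T}_{\player,\arm}|$ equals $\sum_t|\matching_t|$, which is at most $T\min(\carP,\carA)$ --- each round contributes at most $\min(\carP,\carA)$ matched pairs --- and this tighter budget is what must be fed into the last Cauchy--Schwarz (the paper performs the analogous step by bounding the total sample count $\sum_{i,j,\player,\arm}n^T_{\player,\arm}(i,j)$ directly before applying Cauchy--Schwarz over all $\carI\carJ\carP\carA$ cells). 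Everything else --- the good-event construction, the use of the per-step lemma with the shared counts for the two agents of a pair, the per-cell pigeonhole $\sum_{k\le N}1/\sqrt{k}\le 2\sqrt{N}$, and Jensen to handle the random counts --- matches the paper's argument.
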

\begin{myproof} To bound the expected regret, we consider the event where the entries of the true payoff matrices lie within the confidence intervals \( C^t_{\agent, \agent'}(i,j) = [\underbar{A}^t_{\agent,\agent'}(i,j), \bar{A}^t_{\agent,\agent'}(i,j)] \) for all time steps, i.e.:
\begin{equation}
\label{event_all}
\mathcal{E} =\left\{\mathcal{E}_t \; \forall \; t \in [T] \right\}= \left\{ A_{\agent, \agent'}(i,j) \in C^t_{\agent, \agent'}(i,j) \; \forall \agent, \agent' \in \setAgents \times \setAgents,\; \forall i, j \in \rowSetAction \times\columnSetAction, \forall t \in [T]\right\}.
\end{equation}
So for the expected regret, we have that:
\begin{align*}
    \mathbb{E}[R_T] &=\mathbb{E}[R_T\mid \mathcal{E} ] \Pr( \mathcal{E}) + \mathbb{E}[R_T\mid \neg \mathcal{E}] \Pr(\neg \mathcal{E}) \\
    &\leq \mathbb{E}[R_T\mid \mathcal{E} ] + \mathbb{E}[R_T\mid \neg \mathcal{E}] \Pr(\neg \mathcal{E}) \\
    &\leq  \sum_{t=1}^T\underbrace{\mathbf{MI}(m_t, X_t)}_{(A)}  + \underbrace{2  T  (\carA  + \carP)  \Pr(\neg \mathcal{E})}_{(B)}
\end{align*}
As the regret $\mathbb{E}[R_T\mid \neg \mathcal{E}]$ under $\neg \mathcal{E}$ can be bounded by, $2T(\carA  + \carP)$, in the worst-case.

The event \( \mathcal{E} \) involves \( O(T \times \carP \times \carA \times \carI \times \carJ) \) independent 1-sub-Gaussian random variables. Therefore, by applying the union bound and the concentration bounds for sub-Gaussian (see also Appendix \ref{app:sg_lemma}) random variables, we can bound \( \Pr(\neg \mathcal{E}) \) as follows:
\begin{equation*}
    \Pr(\neg \mathcal{E}) \leq 2 \delta T  \carP   \carA \carI  \carJ   =
    \frac{1}{2 T  \carP  \carA },
\end{equation*}
by setting $\delta = \frac{1}{4 T^2  \carP^2 \carA^2  \carI \carJ}$.

For the second term (B) we have that:
\begin{equation}
    (B) \leq 2  T  (\carP +  \carA)   \Pr(\neg \mathcal{E}) \leq 4  T  \carP  \carA  \Pr(\neg \mathcal{E}) \leq 2
\end{equation}
using that $(a + b) \leq 2ab$ for $a,b \geq 1$.

For the first term (A) we have that:
\begin{align}
    (A) &= \sum_{t=1}^T\mathbf{MI}(\matching_t, X_t) \leq \sum_{t=1}^T \sum_{\agent \in \setAgents} \mathbb{E}\left[\sqrt{\frac{2}{1 \vee n^t_{\agent,\matching_t(\agent)}(i^t_{\agent},i^t_{\matching_t(\agent)})}\log \left(\frac{1}{\delta}\right)}\right] \label{eqA:1}\\
    &\leq 2 \sum_{t=1}^T \sum_{(\player,\arm) \in \matching_t}  \mathbb{E}\left[\sqrt{\frac{2}{1 \vee n^t_{\player,\arm}(i^t_{\player},i^t_{\arm})}\log \left(\frac{1}{\delta}\right)}\right]\label{eqA:2}\\
    &= 2 \sum_{i, j} \sum_{\player,\arm}  \mathbb{E}\left[\sum_{\substack{t=1 :\\ i^t_{\player} = i, i^t_{\arm}=j \\ (\player,\arm) \in m_t}}^T
   \sqrt{\frac{2}{1 \vee n^t_{\player,\arm}(i,j)}\log \left(\frac{1}{\delta}\right)}\right]\label{eqA:3}\\
    &\leq 2 \sum_{i, j} \sum_{\player,\arm} \mathbb{E}\left[ \sqrt{4 n^T_{\player,\arm}(i,j)\log \left(\frac{1}{\delta}\right)} \right]\label{eqA:4}\\
    &\leq 2 \sqrt{4 T \carI\carJ\carP\carA\log \left(\frac{1}{\delta}\right)} \label{eqA:5}
\end{align}
In \eqref{eqA:2}, we rewrite the summation by explicitly considering the matched agents. In \eqref{eqA:3}, we restructure the sum to account for all occurrences where the matched agents ($\player$,$\arm$) select actions ($\rowAction$,$\columnAction$). The transition from \eqref{eqA:3} to \eqref{eqA:4} follows from an integral bound, which is a consequence of the Fundamental Theorem of Calculus. Finally, in \eqref{eqA:5}, we obtain an upper bound using the Cauchy Schwarz inequality.

Thus the overall regret by setting $\delta = 1/(4  T^2  \carP^2 \carA^2  \carI \carJ)$ is bounded by:
\begin{align}
    R_T &\leq 2 \sqrt{4 T \carI \carJ \carP \carA   \log\left(4 T^2 \carI \carJ \carP^2 \carA^2  \right)} + 2\\
    &\leq \tilde{O}\left(\sqrt{T \carI \carJ \carP \carA } \right)
\end{align}
\end{myproof}

\subsection{Concentration bound for sub-Gaussian random variables}
\label{app:sg_lemma}
For completeness, we present a lemma on the concentration bounds for sub-Gaussian random variables. More details can be found in book of \cite{noauthororeditor}.

\begin{Lemma}
\label{lemma:subgaussian}
Let \( X_1, \ldots, X_n \) be i.i.d.\ \(\sigma\)-sub-Gaussian random variables with \( \mathbb{E}[X_i] = \mu \). Then, for all \( \epsilon \geq 0 \), it holds that:
\begin{equation*}
    \Pr\left( \left| \frac{1}{n} \sum_{i=1}^n X_i - \mu \right| > \epsilon \right) \leq 2 \exp\left(-\frac{n \epsilon^2}{2 \sigma^2}\right).
\end{equation*}
\end{Lemma}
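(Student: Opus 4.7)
The plan is to use the standard Chernoff-method argument for sub-Gaussian concentration, treating the two tails separately and combining them with a union bound. First I would recall the definition: a random variable $Y$ with $\mathbb{E}[Y]=0$ is $\sigma$-sub-Gaussian when its moment generating function satisfies $\mathbb{E}[\exp(\lambda Y)] \leq \exp(\lambda^2 \sigma^2/2)$ for every $\lambda \in \mathbb{R}$. Applying this to the centered variables $Y_i := X_i - \mu$, independence immediately yields $\mathbb{E}[\exp(\lambda \sum_{i=1}^n Y_i)] = \prod_{i=1}^n \mathbb{E}[\exp(\lambda Y_i)] \leq \exp(n \lambda^2 \sigma^2/2)$, which is the key ingredient.

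Next I would handle the upper tail. For any $\lambda > 0$, Markov's inequality applied to the increasing function $y \mapsto \exp(\lambda y)$ gives
\begin{equation*}
\Pr\!\left(\tfrac{1}{n}\sum_{i=1}^n Y_i > \epsilon\right) = \Pr\!\left(\exp\!\left(\lambda \sum_{i=1}^n Y_i\right) > \exp(\lambda n \epsilon)\right) \leq \exp(-\lambda n \epsilon)\,\mathbb{E}\!\left[\exp\!\left(\lambda \sum_{i=1}^n Y_i\right)\right].
\end{equation*}
Substituting the MGF bound gives an upper bound of $\exp(-\lambda n \epsilon + n \lambda^2 \sigma^2/2)$. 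I would then optimize over $\lambda > 0$: the exponent is a convex quadratic in $\lambda$ minimized at $\lambda^{\star} = \epsilon/\sigma^2$, producing the one-sided bound $\exp(-n\epsilon^2/(2\sigma^2))$.

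For the lower tail I would observe that $-Y_i = \mu - X_i$ is also $\sigma$-sub-Gaussian (sub-Gaussianity of a centered variable is preserved under negation, since the defining MGF condition is symmetric in $\lambda$), so the identical argument with $\lambda > 0$ yields $\Pr(\tfrac{1}{n}\sum Y_i < -\epsilon) \leq \exp(-n\epsilon^2/(2\sigma^2))$. A union bound over the two tail events completes the proof, giving the stated factor of $2$. The case $\epsilon = 0$ is trivial since the right-hand side already exceeds $1$.

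There is no serious obstacle here; the entire argument is the textbook Cramér--Chernoff template, and the only place where minor care is needed is the verification that $-Y_i$ remains $\sigma$-sub-Gaussian and the optimization step over $\lambda$. I would cite the standard reference (e.g., the bandit monograph already referenced in the paper) rather than re-deriving the sub-Gaussian MGF machinery in detail.
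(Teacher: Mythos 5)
Your argument is the standard Cram\'er--Chernoff derivation and is correct in every step: centering, the MGF product bound via independence, Markov's inequality, the optimization $\lambda^{\star}=\epsilon/\sigma^2$, the symmetric lower tail, and the union bound giving the factor of $2$. The paper does not re-derive this lemma at all---it simply states it and defers to the standard bandit reference---so your write-up is precisely the textbook argument the paper is implicitly relying on, and nothing further is needed.
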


Applying Lemma~\ref{lemma:subgaussian} for \(\sigma = 1\) and setting \( \epsilon = \sqrt{\frac{2 \ln(1/\delta)}{n}} \), we obtain:
\begin{equation*}
    \Pr\left( \left| \frac{1}{n} \sum_{i=1}^n X_i - \mu \right| > \sqrt{\frac{2 \ln(1/\delta)}{n}} \right) \leq 2 \delta.
\end{equation*}

Finally, using the union bound, the probability of the event \( \neg \mathcal{E} \) in Equation~\ref{event_all} is bounded as follows:
\begin{equation*}
    \Pr(\neg \mathcal{E}) \leq 2 \delta T \carP \carA \carI \carJ.
\end{equation*}

\section{Simulation details}
\label{app:exp}
We now present a set of simulations that complement our theoretical results. For each setting, we generate 50 random instances by independently sampling the entities payoff matrices of the ZSG from a normal distribution, i.e., $A_{p,a}(\rowAction,\columnAction) \sim \mathcal{N}(0,1)$ for all $(\player, \arm, \rowAction, \columnAction) \in \setPlayer \times \setArms \times \rowSetAction \times \columnSetAction$. We compares the regret for the different settings for varying numbers of agents and actions, along with the theoretical bound of $O\left(\sqrt{T\carA\carP\carI\carJ}\right)$ (dashed blue), where we omit the logarithmic term for clarity.

We compare the \textbf{self-play} scenario outlined in Algorithm~1 with two additional cases in which agents on one side, $\setArms$, have access to additional information and can act differently, in order to study how the regret scales relative to these settings. First, we consider the \textbf{Nash-response} setting. The agents in $\setPlayer$ follow Algorithm~1, while the agents in $\setArms$ form their preferences and select strategies according to the NE of the true payoff matrices \(A\). In this case, we expect the regret to be lower, as the agents on one side know the preferences and play according to the NE. Next, we consider the other extreme, the \textbf{Best-response} setting, where agents \( \arm \in \setArms \) know the true payoff matrices and additionally observe the strategies selected by agents in \( \setPlayer \), i.e., \( x_{\player,\arm} \) for all \( \player \in \setPlayer \), before selecting their own strategies. They then form their preferences and choose strategies that exploit their match by playing best responses. Specifically, each agent computes their best-response strategy as \( \strategy_{\arm,\player} = \argmax_{\strategy} \util_{\arm,\player}(\strategy, \strategy_{\player,\arm}) \), and form their preferences according to the resulting payoffs: \( \pref_{\arm} = \argsort_{\player \in \setPlayer} \util_{\arm,\player}(\strategy_{\arm,\player}, \strategy_{\player,\arm}) \).

Figure~\ref{fig:regret_best_response_all} presents all our experiments for the different configuration of agents and actions. In all experimental settings, we observe sublinear regret growth, indicating that the system progressively learns a \emph{matching equilibrium}. The \textbf{Nash-response} (dotted orange) scenario yields the lowest regret, as one side of the market has full knowledge of the payoff matrices and acts according to NE. In addition, the \textbf{Best-response} (dash-dotted purple) scenario initially outperforms \textbf{Self-play} (solid green), but as the number of agents and actions increases, its performance deteriorates. This is expected as agents in \( \setArms \) exploit their matched partners in \( \setPlayer \), which introduces greater complexity into the learning process.

\begin{figure}[ht]
\centering
\includegraphics[width=1\textwidth]{./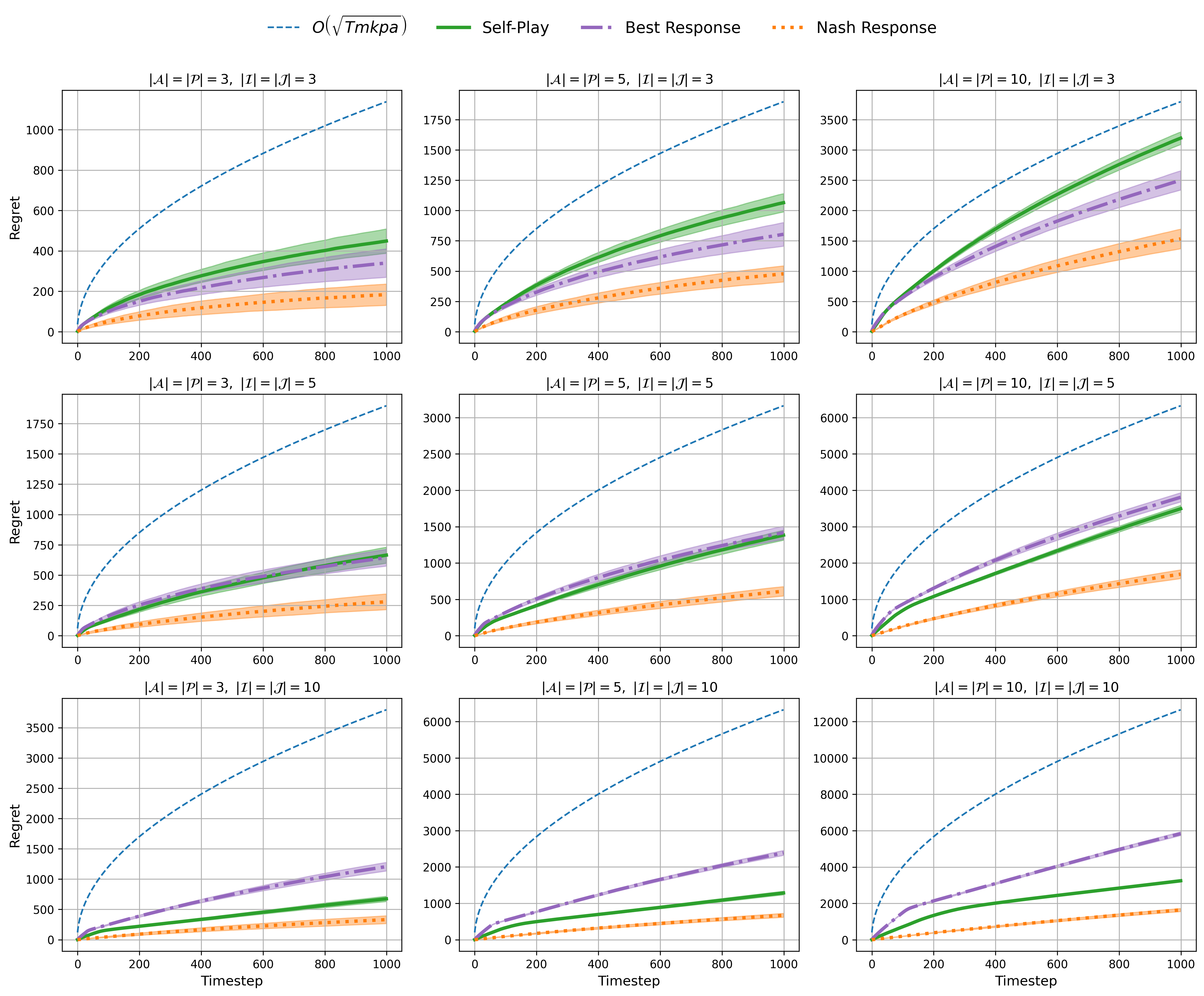}
\caption{ Regret of the different settings, including the theoretical bound for varying numbers of agents and actions. Lines represent the average performance of the algorithm over 50 runs and shaded areas indicate the standard deviation.}
\label{fig:regret_best_response_all}
\end{figure}

\end{document}